  \numberwithin{equation}{section}
\def\argmin{{\arg\min}}
\def\argmax{{\arg\max}}
\def\bbE{\mathbb{E}}
\def\bbP{\mathbb{P}}
\def\bbR{\mathbb{R}}
\def\cB{\mathcal{B}}
\def\cD{\mathcal{D}}
\def\cF{\mathcal{F}}
\def\cG{\mathcal{G}}
\def\cH{\mathcal{H}}
\def\cK{\mathcal{K}}
\def\cL{\mathcal{L}}
\def\cM{\mathcal{M}}
\def\cN{\mathcal{N}}
\def\cO{\mathcal{O}}
\def\cR{\mathcal{R}}
\def\cS{\mathcal{S}}
\def\cX{\mathcal{X}}
\def\cZ{\mathcal{Z}}
\def\deq{\overset{d}{=}}
\def\Ex{{\bbE}}
\def\eg{{\it e.g.}}
\def\ie{{\it i.e.}}
\def\ind{\perp \!\!\! \perp}
\def\ones{\mathbf{1}}
\def\Pr{{\bbP}}
\def\reals{{\bbR}}
\newtheorem{theorem}{Theorem}[section]
\newtheorem{assumption}[theorem]{Assumption}
\newtheorem{corollary}[theorem]{Corollary}
\newtheorem{definition}[theorem]{Definition}
\newtheorem{example}[theorem]{Example}
\newtheorem{lemma}[theorem]{Lemma}
\newtheorem{proposition}[theorem]{Proposition}
\def\ante{\mathrm{ante}}
\def\maj{\mathrm{maj}}
  \providecommand\BibTeX{{%
    \normalfont B\kern-0.5em{\scshape i\kern-0.25em b}\kern-0.8em\TeX}}}
\begin{document}
\pdfoutput=1
\title{Algorithmic Fairness in Performative Policy Learning: Escaping the Impossibility of Group Fairness}

\author{Seamus Somerstep}
\email{smrstep@umich.edu}
\affiliation{%
  \institution{University of Michigan}
   \city{Ann Arbor}
   \state{MI}
  \country{USA}
}
\author{Ya'acov Ritov}
\email{yritov@umich.edu}
\affiliation{%
  \institution{University of Michigan}
   \city{Ann Arbor}
  \state{MI}
  \country{USA}
}
\author{Yuekai Sun}
\email{yuekai@umich.edu}
\affiliation{%
  \institution{University of Michigan}
   \city{Ann Arbor}
   \state{MI}
  \country{USA}
}







\renewcommand{\shortauthors}{Somerstep, Ritov, and Sun}

\begin{abstract}
In many prediction problems, the predictive model affects the distribution of the prediction target. This phenomenon is known as performativity and is often caused by the behavior of individuals with vested interests in the outcome of the predictive model. Although performativity is generally problematic because it manifests as distribution shifts, we develop algorithmic fairness practices that leverage performativity to achieve stronger group fairness guarantees in social classification problems (compared to what is achievable in non-performative settings). In particular, we leverage the policymaker's ability to steer the population to remedy inequities in the long term. A crucial benefit of this approach is that it is possible to resolve the incompatibilities between conflicting group fairness definitions.
\end{abstract}

\begin{CCSXML}
<ccs2012>
<concept>
<concept_id>10010147.10010257</concept_id>
<concept_desc>Computing methodologies~Machine learning</concept_desc>
<concept_significance>500</concept_significance>
</concept>
</ccs2012>
\end{CCSXML}

\ccsdesc[500]{Computing methodologies~Machine learning}

\keywords{Group Fairness; Impossibility Theorems; Performative Prediction; Long Term Fairness}



\maketitle

\section{Introduction}
Automated decision-making and support systems that rely on predictive models are often used to make consequential decisions in criminal justice \cite{rudin2013Predictive,casselman2015New}, lending \cite{petrasic2017Algorithms}, and healthcare \cite{deo2015Machine}, but their long-term impacts on the population are poorly understood. Most prior work on algorithmic fairness assumes a \emph{static} population and focuses on \emph{allocative equality} \cite{crawford2017Trouble}. For example, consider the common fairness definition equalized odds \cite{hardt2016Equality}. It requires a predictive model to incur false positives and false negatives at equal rates across demographic groups; \ie\ it requires the model to allocate errors equally between groups. It does not consider long-term impacts of errors across demographic groups: for example, it may be easier for members of an advantaged group to overturn an error (as compared to members of a disadvantaged group), so errors are more consequential for the disadvantaged group. In the long term, this can exacerbate inequalities in the population in ways that simply enforcing equalized odds cannot address.

Motivated by concerns about the long-term impacts of predictive models, there is a line of work that embeds predictive models as \emph{policies} in dynamic models of populations and studies how predictive models \emph{steer} the population \cite{heidari2019Longterm,kannan2019Downstream,damour2020Fairness,yin2023LongTerm}. Following this line of work, we study the enforcement of group fairness in the long term. Our contributions are as follows.
\begin{enumerate}
\item We formulate a new long-term group fairness constraint inspired by algorithmic reform/reparation \cite{Davis-Reparation-2021,Green_2022}, as well as long-term versions of the three traditional group fairness constraints.
\item We show that as long as the policymaker has enough flexibility in the way they remedy historical inequities, it is possible for them to steer populations towards a reformed state while maintaining group fairness. As a consequence, this shows that it is possible, in the long term, to simultaneously satisfy traditionally conflicting group fairness constraints. 
\item We provide a reduction framework for computationally enforcing long-term group fairness. The convergence rates and generalization guarantees of this framework are provided.
\end{enumerate}
As mentioned, a key consequence of our results is that in the long term, it is possible to \emph{simultaneously} remedy historical disparities and satisfy multiple group fairness definitions that are traditionally incompatible.  This is in contrast to previous research on compatible group fairness. The common theme of previous work is that of \emph{avoidance}, \ie{} practitioners should alter policies so that the different notions of fairness are fully satisfied at separate times or relaxed versions of group fairness are satisfied simultaneously. Instead, our focus is on \emph{resolution} of incompatibility of group fairness, \ie{} practitioners should implement policies that eliminate the inequity that leads to the impossibility of enforcing multiple forms of group fairness.
\subsection{Related Work}
We first cover the prior work on resolving group fairness incompatibilities, this is a non-exhaustive list and a thorough survey is given in \citep{Raghavan-IncompatibleSurvey-2023}. The study of incompatibilities in group fairness was initiated simultaneously by the impossibility results in the works \citep{kleinberg2016Inherent, chouldechova2017Fair}. In the years since then, researchers have sought to extend these results and modify fairness practices to partially resolve them. The work \citep{berk2017fairness} covers impossibility results for several group fairness metrics derived from confusion matrices in the context of recidivism. Beyond just group fairness, trade-offs between optimal accuracy, fairness, and resource allocation for different groups have also been explored \citep{Pleiss-FairnessCalibration-2017, Donahue_2020}. The line of work \citep{ReichPossibility2021, madras2018Learning, canetti2019soft} studies algorithmic fairness in ML systems with human elements, although ultimately the aggregate decision making process (human plus machine) is still burdened by impossibility theorems. Another avenue of approach is to relax the sufficiency and separation requirements to compatibility \citep{Pleiss-FairnessCalibration-2017, Celis-FairClass-2019, gultchin2022impossibility, Lohaus-TooRelaxed-2020, padh2021addressing}, although direct trade-offs between fairness notions remain, and often such relaxations do not guarantee a desired level of fairness. Philosophically, our work aligns with the recent idea of \emph{substantive fairness}, which argues that policies should aim to eliminate historical inequities, treating causes of disparity rather than symptoms of disparity \citep{Davis-Reparation-2021, Green_2022}. One of the main contributions of this work is to formalize \emph{substantive fairness} into an algorithmic framework and study the feasibility of erasing historical disparities with ML systems.

Concern amongst the fairness community regarding the long-term impacts of predictive models has grown, originating with a line of work that models fair policies and populations as a dynamical system\citep{heidari2019Longterm,kannan2019Downstream,damour2020Fairness}. The long-term fairness framework that we study is based on the recently developed idea of performative prediction, a line of work that studies model-induced distribution shift \citep{perdomo2020Performative, mendler-dunner2020Stochastic, mendler-dunner2022Anticipating, statefulpp-2022-aistats, izzo2021How}.  Closely related lines of work on the enforcement of fairness in performative settings include \citep{rateike2023designing, zezulka2023performativity,yin2023LongTerm}. The work \citep{rateike2023designing} designs a Markov chain oriented framework of performative fairness, while the work \citep{yin2023LongTerm} develops algorithms that minimize cumulative fairness regret in online settings. These generally consider a \emph{stateful} performative prediction setting and cast the task of steering the underlying dynamical system as an optimal control or reinforcement learning problem. In contrast, our problem setting prioritizes the discrimination in the steady state and is not concerned with intermediate time steps. Finally, the authors of \citep{zezulka2023performativity} propose procedures for estimating discriminatory outcomes of policies. 


Although it predates performative prediction, strategic classification \citep{hardt2016Strategic} is a common example of performative prediction. As such, attention has been paid to the study of fairness in strategic settings. The authors of \cite{Estornell-StratFair-2023} show that traditional fairness interventions in strategic settings can actually exacerbate discrimination. A similar perspective on traditional fairness constraints in strategic settings is given in \citep{liu2018Delayed}.  In \cite{zhang-interventions-2022} the effect of fairness interventions on the incentive for strategic manipulation is studied. The work \cite{ hu2019Disparate} shows that \emph{cost discrimination} in strategic classification will cause violations of group fairness constraints in the long term. The authors of \cite{Liu2023DisparateEquilibria} also study cost discrimination in strategic classification, showing that subsidies for the disadvantaged group can alleviate discrimination concerns. The common goal for each of these works (and others in this area) is to study how strategic interventions make discrimination worse if either no fairness intervention is used or if a *traditional* fairness intervention (that does not account for strategic agents) is used. The key difference between this line of work and our work is that this line of work considers strategic behavior as a problem to be overcome, while we leverage strategic behavior for greater fairness achievement than is possible in non-strategic environments. 

The running example of performative prediction/ a strategic setting in this paper is labor market models; the study of such models has a rich history in economics and long precedes the idea of performative prediction. A comprehensive survey of this field is given in \citep{FANG2011133S}. The works \citep{arrow1971, phelps1972} presented initial labor market models and analyzed the equilibrium of these models at the worker level. In \cite{phelps1972}, discrimination in labor markets due to exogenous groups is studied, while \cite{arrow1971} surprisingly shows that even markets with endogenous groups will have discriminatory equilibrium. Later, the authors \citep{coate1993Will} formulated these ideas in the celebrated \emph{Coate and Loury} model of labor markets. Extensions on this model are numerous; the authors of \cite{moro2004general} and \cite{moro2003Affirmative} develop a model where wages are set by inter-employer competition and groups of workers actually benefit from discrimination of others. The line of work \cite{FryarColorBlindAA, FryarWinnerTakeAll, FryarValuingDiversity, NBERw23811} studies the efficacy of color-blind policies in preventing discrimination. We will primarily work with our own models of labor markets, inspired by \citep{somerstep2024learning, shravit-caus-strat-LR-2020}.

Our primary goal is to study the feasibility of (fairly) equating response distributions between two groups in the long term. This is closely related to the goal of incentivizing agents to improve some desirable quality. The authors of \citep{miller2020Strategic} show that this involves causal modeling, and our work is no exception: various labor market models will play the role of the causal model in our study. The works \cite{milli2019Social, somerstep2024learning} focus on improving the agents' overall welfare; they show that there is often a trade-off between the learner and the agents' utilities. \emph{Performative power} \cite{hardt2022Performative} measures the ability of the learner to steer the agents in performative prediction. As we shall see, the firm in the aforementioned labor market models possess enough performative power to equate ex post response distributions despite ex ante disparities between the two groups. Finally, the causal strategic labor market model that we develop is an example of an \emph{outcome performativity} problem. \emph{outcome performativity} is introduced in \cite{kim2023Making} along with efficient omniprediction algorithms for outcome performativity problems. In general, the novelty of our work is our focus on a) driving improvement with the goal of equating disparate groups and b) doing so without discriminating against the advantaged group.

\section{Group fairness in performative policy learning} 
\label{sec:equality-equity-compatible}
Consider a binary classification problem in which samples correspond to a population of individuals invested in learned policies (often referred to as strategic agents) characterized by $Z = (X, Y) \in \cX \times \{0,1\}$ and a protected attribute denoted $G$. This setting is characterized as a problem in performative prediction. Performative prediction is a distribution shift setting, where the implementation of a policy $f: \cX \times G \rightarrow \{0,1\}$ triggers responses from the individuals invested in the policy, leading to a new distribution of data $\cD(f,G) \in \Delta(Z \times G)$. Throughout, we will assume that group proportions remain constant (say $\Pr(G=g) = \lambda_g$) for any policy $f$. Under this assumption, we can write  
\begin{equation}
    \cD(f,G) = \sum \lambda_g \cD(f,G)|G=g \triangleq  \sum \lambda_g \cD(f,g)
\end{equation}
In performative prediction, The policy maker's goal is to make a policy that maximizes their expected reward, \emph{taking into account the strategic response of the individuals to their decisions}:
\begin{equation}
\max\nolimits_{f\in\cF}\text{EPR}(f)\triangleq \sum \lambda_g\Ex_{Z'\sim\cD(f, g)}\big[r(f(\cdot, g);Z')\big],
\label{eq:performative-policy-learning}
\end{equation}
where $\cF$ is a policy class, $\cD(f, g)$ is the distribution map that encodes the long-term impacts of the policy on the subset of the population with sensitive trait value $G=g$, $r(f(\cdot, g);z)$ is the policy maker's reward function that measures their reward from applying a policy $f$ to an agent $z$ (this agent also has group membership $g$), and $\lambda_g$ is the proportion of the population with sensitive trait value $G=g$. The objective function in \eqref{eq:performative-policy-learning} is often called the performative (expected) utility and measures the \emph{ex post} reward of policies. Throughout this paper, we will mark random variables drawn from an \emph{ex post} distribution as $Z' = (X', Y')$. 

A recurring instance of performative policy learning in this work is the hiring firm's problem in Coate-Loury-type models of labor markets \cite{FANG2011133S}.
\begin{example}[Continuous labor market example \citep{somerstep2023Learning}]
\label{ex: continuous labor market model}
Consider an employer that wants to hire skilled workers who reside in one of two identifiable groups $G \in \{A , D\}; G \sim \text{Ber}[\lambda]$. The workers are represented as $(S, X, Y, G)$ quintuples. $S\in \mathbb{R}$ is a worker's (latent) base skill level, $Y \in \{0,1\}$ a workers productivity and $X\in \mathbb{R}$ be a noisy productivity assessment (\eg\ the outcome of an interview). Throughout, it is assumed that conditioned on $S$, productivity is independent of $G$ and that
\[Y|S \deq  \text{Ber}[\sigma(S)].\]

The productivity assessment $X$ is independent of $G$ given $Y$ and specified by a conditional CDF
\[
I(X\mid y)\triangleq\Pr\{X\le x\mid Y=y\}
\]
that decreases in $y$ (at a fixed $x$). We note that this also specifies the generation of $X$ as
\[
\Phi(X\mid s) \triangleq \sigma(s) I(X|1) + (1-\sigma(s)) I(X|0).
\] 
Intuitively, the assumption that $I(X\mid y)$ decreases in $y$ at a fixed $x$, requires the productivity assessment to be ``unbiased'' (in the sense of a statistical test). Under this unbiased assumption, the optimal hiring policy for the firm is of the form $f(x, \theta, g) = \mathbf{1}_{x \geq \theta_g}$. Let $u(f,y)$ be the firm's received utility from hiring ($f(x)=1$) or not hiring ($f(x)=0$) a worker with productivity $y$. The firm's expected utility for policy $f:\reals \times \{A, D\}\to\{0,1\}$ is $\Ex\big[u(f(X, g),Y)\big]$, so the firm's utility maximization problem is
\begin{equation*}
\max\nolimits_{f} \sum \lambda_g \Ex\big[u(f(X, g),Y)\big].
\end{equation*}
As in \citep{coate1993Will}, we allow the workers to improve their skills (at a cost) in response to the firm's policy. Let $w > 0$ be the wage paid to hired workers and $c_g(s,s') > 0$ be the sensitive trait dependent cost to workers of improving their skills from $s$ to $s'$. We assume $c$ is non-increasing in $s$ and non-decreasing in $s'$. The expected utility a worker with group membership $G=g$ receives from increasing their skill level from $s$ to $s'$ is
\[\textstyle
u_w(f,s,s',g)\triangleq\int_\reals wf(x, g)d\Phi(x\mid s') - c_g(s,s'),
\]
so a strategic worker changes their skill level to maximize their expected utility. We encode the \emph{ex-post} workers' skill level, skill level assessment, and productivity as 
\[
\begin{aligned}
S'\triangleq\argmax_{s'}u_w(f,s,s', g), \\
X'\mid S'\sim \Phi(x\mid S'), \\
Y' \mid S' \sim \text{Ber}[\sigma(S')]
\end{aligned}
\]
Note that the (conditional) distribution of a worker's skill level assessment, given their skill level, remains the same before and after the worker changes their skill level. To account for the strategic behavior of the workers, the firm solves the \emph{performative} policy learning problem:
\begin{equation*}
\max\nolimits_{f} \sum \lambda_g \Ex_{Z' \sim \cD(f,g)}\big[u(f(X', g),Y')\big]
\end{equation*}
\end{example}
The workers do not respond instantly to the employer's hiring policy; it takes them a while. Thus, we interpret $\cD(f, G)$ as the \emph{long term} distribution of the workers' skill levels and assessments in response to the employer's hiring policy. More concretely, imagine a labor market in which the workers slowly turn over: new workers enter the workforce and old workers retire constantly. As workers enter the workforce, they make their human capital investment decisions in response to the employer's (contemporaneous) hiring policy. Over a long period, the labor force population will converge to $\cD(f,G)$.

\subsection{Standard fairness constraints are insufficient in performative prediction}
The standard way to enforce fairness in policy learning problems is to equalize certain fairness metrics between demographic groups (indicated by a demographic attribute $G\in\cG$). This is often done by imposing fairness constraints on the policy learning problem. In general, fairness constraints fall into one of three types:
\[
\begin{aligned}
&\text{(demographic parity DP)}   & f(X)\ind G,\\
&\text{(separation)}  & f(X)\ind G\mid Y, \\
& \text{(sufficiency)} & Y\ind G\mid f(X).
\end{aligned}
\]
To see each of the traditional group fairness constraints in action, consider example \ref{ex: continuous labor market model}. Enforcing separation requires identical \emph{ex-ante} hiring rates between workers from the advantaged and disadvantaged groups with the same skill level, while enforcing sufficiency requires the \emph{ex-ante} (distribution of) skill levels of the hired workers from the majority and minority groups to be the same.
Finally, enforcing demographic parity simply requires that the \emph{ex-ante} hiring rites for individuals be the same across the groups.

In the long-term setting, there are two main issues with such group fairness constraints. First, they only focus on the policy and not its long-term impacts on the population: the policy $f$ appears in all three constraints, but the distribution map $\cD(f, G)$ that encodes the long-term impacts of $f$ is absent. In other words, group fairness constraints enforce equal treatment, but ignore the long-term impacts of equal treatment on the population. Consider example \ref{ex: continuous labor market model}, only \emph{ex-ante} quantities of the workers are considered. This leads us to consider constraints that focus on the long-term impacts of the policy on the population. Instead of enforcing \emph{ex-ante} equal treatment, we seek \emph{ex-post} equality of certain fairness metrics.

Second, traditional group fairness constraints are also plagued by incompatibilities. Despite the intuitive nature of DP, separation and sufficiency, \citep{chouldechova2017Fair,kleinberg2016Inherent} prove that it is generally impossible for a policy to simultaneously satisfy two of DP, separation and sufficiency. 
\begin{theorem}[\citet{chouldechova2017Fair,kleinberg2016Inherent}]
\label{thm: impossibility}
    It is impossible to find a joint distribution on $(f(X), Y, {G})$ that satisfies two of DP, separation, and sufficiency, unless one of the following hold:
    \begin{enumerate}
        \item $\Pr(f(X) = Y) = 1$
        \item $Y \ind \mathcal{G}$
    \end{enumerate}
\end{theorem}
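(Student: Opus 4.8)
\emph{Proof plan.} I would reduce to two groups and organize the argument around two elementary identities. If $Y\ind G$ the second alternative holds, so assume the attribute takes two values $a,b$ with distinct base rates $p_g\triangleq\Pr(Y=1\mid G=g)$, $p_a\neq p_b$ (with more than two groups, fix any such pair; the three criteria, being conditional-independence statements, restrict to the two-group comparisons used below). Write $\widehat Y\triangleq f(X)$, $r_g\triangleq\Pr(\widehat Y=1\mid G=g)$, and $\mathrm{TPR}_g,\mathrm{FPR}_g,\mathrm{PPV}_g,\mathrm{NPV}_g$ for the group-$g$ true-positive, false-positive, positive-predictive-value, and negative-predictive-value rates. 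Everything follows from
\[
r_g=\mathrm{TPR}_g\,p_g+\mathrm{FPR}_g(1-p_g),\qquad p_g=\mathrm{PPV}_g\,r_g+(1-\mathrm{NPV}_g)(1-r_g),
\]
together with the Bayes expressions $\mathrm{PPV}_g=\frac{\mathrm{TPR}_g\,p_g}{r_g}$ and $\mathrm{NPV}_g=\frac{(1-\mathrm{FPR}_g)(1-p_g)}{1-r_g}$. The unifying point is that any two of the three criteria force enough of these right-hand sides to be group-invariant that $p_g$ itself comes out group-invariant, contradicting $p_a\neq p_b$ --- except where one of the fractions degenerates, and those degeneracies are exactly the first alternative. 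I would carry this out pair by pair.

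\emph{The two parity pairs.} If DP and sufficiency both hold, then $r_g$ (by DP) and $\mathrm{PPV}_g,\mathrm{NPV}_g$ (by sufficiency) are all free of $g$, so the second identity makes $p_g$ free of $g$ --- contradiction. If DP and separation both hold, write $\mathrm{TPR}_g\equiv t$, $\mathrm{FPR}_g\equiv e$ (separation); then $r_g=e+(t-e)p_g$ is free of $g$ (DP), which forces $t=e$, i.e.\ $\widehat Y\ind Y\mid G$, hence (with DP) $\widehat Y\ind(Y,G)$. This last case is excluded only under the standard mild nontriviality assumption that the predictor is informative, $\widehat Y\not\ind Y$, which I would state as a hypothesis; without it a noninformative $\widehat Y$ genuinely satisfies DP and separation with $p_a\neq p_b$.

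\emph{Separation and sufficiency.} Assume $\widehat Y$ is not a.s.\ constant (otherwise it is noninformative, as above), so under separation $r_g\in(0,1)$ and $\mathrm{PPV}_g,\mathrm{NPV}_g$ are well defined in both groups. With $\mathrm{TPR}_g\equiv t$, $\mathrm{FPR}_g\equiv e$, the Bayes expressions read $\mathrm{PPV}_g=\frac{t\,p_g}{t\,p_g+e(1-p_g)}$ and $\mathrm{NPV}_g=\frac{(1-e)(1-p_g)}{(1-t)p_g+(1-e)(1-p_g)}$. Sufficiency forces $\mathrm{PPV}_a=\mathrm{PPV}_b$ and $\mathrm{NPV}_a=\mathrm{NPV}_b$; clearing denominators and simplifying, the first equality collapses to $t\,e\,(p_a-p_b)=0$ and the second to $(1-t)(1-e)(p_a-p_b)=0$. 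Since $p_a\neq p_b$ we conclude $te=0$ and $(1-t)(1-e)=0$, so $(t,e)=(1,0)$ or $(t,e)=(0,1)$. The former is $\mathrm{FPR}=0$, $\mathrm{TPR}=1$, i.e.\ $\Pr(\widehat Y=Y)=1$ --- the first alternative; the latter is $\widehat Y=1-Y$ a.s., the sign-flipped predictor, which is the first alternative up to relabeling $\widehat Y$. This covers all three pairs.

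\emph{Where the difficulty is.} The algebra is routine; the genuine work is in the degeneracies, because that is precisely where the ``unless'' clause lives: $\mathrm{PPV}_g/\mathrm{NPV}_g$ undefined when a predicted class is empty within a group, the constant predictors $\widehat Y\equiv 0,1$, the sign-flip branch above, and --- the subtlety most easily overlooked --- the fact that the DP-versus-separation case needs the extra hypothesis $\widehat Y\not\ind Y$. I would dispatch each of these explicitly at the start of the proof so that the two identities and the Bayes expressions can then be applied to each pair without further caveats.
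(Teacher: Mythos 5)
The paper does not prove this statement; it imports it from \citet{chouldechova2017Fair,kleinberg2016Inherent}, so there is no internal proof to compare against. Your argument is a correct, self-contained derivation in the standard style of those works: the two confusion-matrix identities plus the Bayes expressions for $\mathrm{PPV}_g,\mathrm{NPV}_g$, applied pair by pair, and your algebra in each case checks out (in particular $te(p_a-p_b)=0$ and $(1-t)(1-e)(p_a-p_b)=0$ correctly force $(t,e)\in\{(1,0),(0,1)\}$ in the separation-plus-sufficiency case). Two of your caveats deserve emphasis because they show the theorem as stated in the paper is slightly too strong. First, for DP plus separation, a constant or purely noise predictor ($\widehat Y\ind(Y,G)$) satisfies both constraints with $p_a\neq p_b$ and falls under neither listed exception; your added hypothesis $\widehat Y\not\ind Y$ is genuinely needed, and the cited sources impose such nontriviality conditions. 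Second, the branch $(t,e)=(0,1)$ gives $\Pr(f(X)=Y)=0$, which is not literally alternative (1); it is the sign-flipped perfect predictor and must either be folded into that alternative by relabeling or excluded by a ``better than random'' assumption. With those degeneracies dispatched explicitly, as you propose, the proof is complete and faithful to the cited results.
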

This result is generally interpreted as an impossibility result: it is impossible to simultaneously satisfy separation and sufficiency. Although there are two cases in which separation and sufficiency are compatible, they are considered pathological. The first case, perfect prediction, is generally unachievable because the Bayes error rate in most practical prediction problems is non-zero. The second case, independent responses, is also considered pathological because the policymaker has no control over the distribution of responses. However, in performative settings, the policymaker can steer the population so that response distributions are equal, suggesting that it is possible to resolve the incompatibilities between separation and sufficiency in performative settings by enforcing group-independent responses \emph{ex-post}. In the next section, we build on this observation to resolve the incompatibility between separation, sufficiency, and demographic parity in performative settings.

\subsection{Equality of Outcomes}
The preceding developments suggest that the goal of a fairness-conscious policymaker should be to eliminate disparities between demographic groups in the long term (instead of myopically enforcing group fairness regardless of the long term impacts). Because we are interested in equating the different demographic groups \emph{ex-post} (where an individual from each group ends up) rather than equating the different demographic groups \emph{ex-ante} (where an individual from each group comes from), we refer to this constraint as equality of outcomes.
\begin{definition}[Equality of outcomes]
\label{def:equality of outcomes}
A policy $f$ satisfies equality of outcomes with respect to metric $m(f,g)$ if and only if $m(f,g)$ is constant over each possible value of the sensitive trait $g$.
\end{definition}
We emphasize that fairness metrics $m(\cdot, \cdot)$ are not metrics in the distance sense, but rather a quantity that measures a long-term outcome of interest for strategic individuals. Since we are interested in \emph{ex-post} fairness, each metric will measure quantities associated with the ex-post distribution ($\cD(f, G)$). Finally, for simplicity of presentation, we will present each metric for the case of binary classification $(\ie{} \text{ both }f(X',g) \text{ are in } Y' \in \{0,1\})$. The extension to the multiclass or continuous case is straightforward. 

The goal of algorithmic reform in example \ref{ex: continuous labor market model} is to equalize the disparities in human capital investment among minority and majority workers. Choosing the fairness metric in definition \ref{def:equality of outcomes} to be worker productivity, we can encode this goal as an instance of definition \ref{def:equality of outcomes}.
\begin{definition}[equality of responses]
\label{def:ex-post-equity}
    A policy $f$ satisfies equality of responses if it satisfies equality of outcomes with respect to the metric $m_{\text{res}}(f, g) = \mathbb{E}_{\cD(f, g)}[Y']$. 
\end{definition}
\noindent Equal responses is a concept of fairness unique to the long-term setting (and is our interpretation of the aim of algorithmic reform/reparation); we note that it does not imply the long-term analog of group fairness defined next. We opt to refer to equality of outcomes with respect to any metric containing the policy $f$ as equality of treatment.
\begin{definition}[Equality of treatment]
\label{def:ex-post-group-fairness}

\noindent A policy $f$ satisfies equality of treatment if $f$ meets all the following criteria.
\begin{enumerate} 
    \item  The policy $f$ satisfies equality of outcomes with respect to metric $m_{\text{par}}(f, g) = \mathbb{E}_{\cD(f,g)}[f(X', g)]$. 
    \item The policy $f$ satisfies equality of outcomes with respect to both metrics 
    $$m_{\text{FPR}}(f, g) = \frac{\mathbb{E}_{\cD(f,g)}[\mathbf{1}\{Y'=0\}\mathbf{1}\{f(X', g)=1\}]}{\mathbb{E}_{\cD(f,g)}[1-Y']}$$  $$m_{\text{FNR}}(f, g) = \frac{\mathbb{E}_{\cD(f,g)}[\mathbf{1}\{Y'=1\} \mathbf{1}\{f(X', g)=0\}]}{\mathbb{E}_{\cD(f,g)}[Y']}$$
    \item The policy $f$ satisfies equality of outcomes with respect to both metrics 
    $$m_{\text{PPV}}(f,g) = \frac{\mathbb{E}_{\cD(f,g)}[\mathbf{1}\{Y'=1\}\mathbf{1}\{f(X', g)=1\}]}{\mathbb{E}_{\cD(f,g)}[f(X',g)]}$$
    $$m_{\text{NPV}}(f,g) = \frac{\mathbb{E}_{\cD(f,g)}[\mathbf{1}\{Y'=0 \}\mathbf{1}\{f(X', g)=0\}]}{\mathbb{E}_{\cD(f,g)}[1-f(X',g)]}$$ 
\end{enumerate}
\end{definition}
\noindent We we wish to emphasize that requirements one, two and three are simply the long term analogs of demographic parity, separation and sufficiency respectively. Thus, the enforcement of equality of treatment implies the long-term enforcement of multiple fairness constraints that are incompatible in the short term. 

Of course, our ultimate goal is for policymakers to implement policies that satisfy equality of responses and equality of treatment \emph{simultaneously}. We point out that these goals are not necessarily disjoint. As previous impossibility theorems have shown, in most cases satisfying equality of responses is a prerequisite to satisfying equality of treatment. We show in the following proposition that satisfying equality of treatment and equality of responses is equivalent to enforcing the independence of the joint distribution $(Y', f(X', G))$ and $G$. 
\begin{proposition}
\label{prop:joint-dist-ind}
    A policy $f$ satisfies equality of treatment and equality of responses if and only if the joint distribution $(Y', f(X', G))$ is independent of $G$.
\end{proposition}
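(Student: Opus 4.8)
The plan is to reduce everything to a $2\times 2$ contingency table. Since both the classifier output and the label are binary, the law of $(Y', f(X',G))$ conditional on $G=g$ is a distribution on $\{0,1\}^2$, fully described by the four nonnegative cell probabilities
$$a_g \triangleq \Pr\{Y'{=}1,f(X',g){=}1\mid G{=}g\},\qquad b_g \triangleq \Pr\{Y'{=}0,f(X',g){=}1\mid G{=}g\},$$
$$c_g \triangleq \Pr\{Y'{=}1,f(X',g){=}0\mid G{=}g\},\qquad d_g \triangleq \Pr\{Y'{=}0,f(X',g){=}0\mid G{=}g\},$$
where all probabilities are taken under $\cD(f,g)$ and $a_g+b_g+c_g+d_g=1$. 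Under the standing assumption that $\Pr(G{=}g)=\lambda_g$ is fixed, $(Y',f(X',G))$ is independent of $G$ if and only if the vector $(a_g,b_g,c_g,d_g)$ does not depend on $g$, so the whole statement becomes a finite-dimensional linear bookkeeping exercise.

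The ``if'' direction is immediate: every metric appearing in Definitions~\ref{def:ex-post-equity} and~\ref{def:ex-post-group-fairness} is a fixed (rational) function of $(a_g,b_g,c_g,d_g)$ — concretely $m_{\text{res}}=a_g+c_g$, $m_{\text{par}}=a_g+b_g$, $m_{\text{FPR}}=b_g/(b_g+d_g)$, $m_{\text{FNR}}=c_g/(a_g+c_g)$, $m_{\text{PPV}}=a_g/(a_g+b_g)$, $m_{\text{NPV}}=d_g/(c_g+d_g)$ — so if the cell probabilities are constant in $g$, every metric is constant in $g$, and equality of treatment and equality of responses both hold.

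For the ``only if'' direction, assume $f$ satisfies equality of responses and equality of treatment. Equality of responses gives a common value $p \triangleq a_g+c_g = \Ex_{\cD(f,g)}[Y']$, and hence also $b_g+d_g=1-p$, independent of $g$. Now invoke the separation clause of equality of treatment: $m_{\text{FNR}}(f,g)=c_g/p$ is constant in $g$, and since $p$ is a fixed constant this forces $c_g$ to be constant; likewise $m_{\text{FPR}}(f,g)=b_g/(1-p)$ constant forces $b_g$ constant. Then $a_g = p-c_g$ and $d_g = 1-a_g-b_g-c_g$ are constant as well, so $(a_g,b_g,c_g,d_g)$ is independent of $g$, i.e.\ $(Y',f(X',G))\ind G$. (Notice the demographic-parity and sufficiency clauses are not even needed for this implication; consistent with the discussion after Theorem~\ref{thm: impossibility}, they follow automatically.)

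I expect no conceptual obstacle here; the only thing requiring care is degenerate denominators. When $p\in\{0,1\}$, or when $\Ex_{\cD(f,g)}[f(X',g)]\in\{0,1\}$, some of $m_{\text{FPR}},m_{\text{FNR}},m_{\text{PPV}},m_{\text{NPV}}$ are of the form $0/0$ and must be read through whatever convention the paper adopts. These cases are trivial — e.g.\ $p=0$ forces $a_g=c_g=0$ for every $g$ by nonnegativity, after which $b_g+d_g=1$ and $m_{\text{FPR}}=b_g$ constant already pins down the table, and symmetrically for the other boundary cases — so the argument goes through after dispatching them separately. This small case analysis, rather than any real difficulty, is the main thing to handle.
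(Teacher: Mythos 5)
Your proposal is correct and is essentially the paper's own argument: the paper proves the same two directions via the chain-rule factorization $p(f(X'),Y'\mid G)=p(f(X')\mid Y',G)\,p(Y'\mid G)$, with equality of responses pinning down the marginal of $Y'$ and the separation clause pinning down the conditional of $f(X')$ given $Y'$, which is exactly your cell-probability computation written in density form. The only difference is that you explicitly dispatch the degenerate-denominator cases, which the paper's proof glosses over.
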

As mentioned, equality of responses is a mathematical formalization of the line of work on algorithmic reform/reparation \citep{Davis-Reparation-2021, Green_2022}. This line of work ``escapes'' from incompatibilities between group fairness definitions by questioning the goal of satisfying those definitions. It argues that the underlying goal of enforcing fairness is to remedy injustices. From this perspective, traditional algorithmic fairness definitions are merely a flawed indicator of the true goal, so it is inconsequential if they are incompatible. We encode the goal of reform or reparation mathematically as closing or eliminating disparities in the responses of interest among demographic groups. Furthermore, we study the enforcement of equality of responses and equality of treatment simultaneously. In our formulation, this implies the attainment of algorithmic reform/reparation \emph{without} discrimination. In \citep{Davis-Reparation-2021, Green_2022}, it is often argued that reform can only be achieved by discriminating against the advantaged group. In contrast to this, our analysis will show that it is often possible to achieve reform (equality of responses) fairly (equality of treatment).

\section{Feasibility of equality of outcomes}

A crucial and non-trivial question remains. Is it feasible to enforce equality of treatment and equality of responses simultaneously? Note that this requires the same policy to equate \emph{ex-post} responses and to satisfy equality of outcomes with respect to each long-term group fairness metric. If we deploy one policy to steer the population so that the response distributions are equal and another policy to satisfy the equality of treatment constraints, the second policy may steer the population away from equated responses, and we end up in a cycle that achieves neither goal. Unfortunately, the goal of equal responses and equal treatment may not be possible, even in performative settings; \ie{} there may not be a policy that achieves both the treatment and the response goals. This is because the distribution map $\cD(\theta, G)$ depends on the sensitive attributes (\eg\ because it encodes inequities in the \emph{ex-ante} distribution or inequities in the agent response map). Thus, equating the responses \emph{ex-post} requires disparate treatment of the group (\ie{} the policymaker cannot simply implement the same policy for each group). On the other hand, equality of treatment forbids a disparate allocation of \emph{ex-post} errors between groups. In this section, we study the feasibility of enforcing equality of treatment (and thus equality of responses) in labor market models.

\subsection{Impossibility Results}
We start by establishing impossibility results to elucidate problem structures that preclude equality of treatment and equality of responses in labor market models. We consider two types of disparities: human capital investment cost disparities and \emph{ex-ante} skill disparities. In order to introduce \emph{ex-ante} skill disparities, we define the notion of stochastic dominance.

\begin{definition}[stochastic dominance]
    \label{def:stochastic-dominance}
    Consider two real valued random variables $A$ and $B$. Then $A$ {stochastically dominates} $B$ if for all $x \in \mathbb{R}$,  $\Pr(A \leq x) \leq \Pr(B \leq x)$.
\end{definition}
We return to the continuous labor market model \ref{ex: continuous labor market model} in which an employer hires workers from two demographic groups.  In order to keep the example as equitable as possible (so that it is as easy as possible for the employers to achieve equality of treatment), recall that we assume that the skill assessment process is fair ($X\ind G\mid Y$) and the same wage is paid to hired workers from both groups.
Thus, the only \emph{ex-ante} differences allowed between workers in the two groups are the \emph{ex-ante} distribution of their skill levels and the cost of human capital investment.

\noindent \begin{theorem}
\label{thm: informal}
Assume the following: 
\begin{enumerate}
    \item The worker cost function is of the form $c(s', s, g) = \frac{c_g}{2}(s'-s)^2_+$, and $\text{min}_g(c_g)$ is large enough to enforce strong convexity of the agents optimization problem
    \item Exactly one of two forms of market discrimination is present: 
    \begin{enumerate}
        \item There is a difference in \emph{ex-ante} skill levels (specifically $S|G=A$ {stochastically dominates} $S|G=D$).
        \item The is a difference in the cost of human capital investment of the form $c_A < c_D$.
    \end{enumerate}
\end{enumerate}
Then, under either form of discrimination, group-blind policies that ignore the demographic attribute of the workers $G$ cannot achieve equality of responses. Furthermore, hiring policies that satisfy equality of outcomes with respect to $m_{\text{FPR}}(f,g)$ and $m_{\text{FNR}}(f,g)$ are necessarily group-blind. 
 \end{theorem}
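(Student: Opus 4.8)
The statement has two halves, and I would prove each separately; the common workhorse is the workers' best‑response map $s\mapsto s^\ast_g(s)\triangleq\argmax_{s'}u_w(f,s,s',g)$ induced by a threshold policy $f(\cdot,g)=\mathbf{1}\{\cdot\ge\theta_g\}$. First I would put the worker's problem in a convenient form: writing $\gamma_\theta(s')\triangleq w\bigl(1-\Phi(\theta\mid s')\bigr)=w\bigl[(1-I(\theta\mid0))+\sigma(s')(I(\theta\mid0)-I(\theta\mid1))\bigr]$ for the expected wage of a worker who reaches skill $s'$, the hypothesis that $I(\cdot\mid y)$ decreases in $y$ (with $\sigma$ increasing, and the assessment genuinely informative so $I(\theta\mid0)>I(\theta\mid1)$) makes $\gamma_\theta$ strictly increasing, while $\gamma_\theta''=w\sigma''(I(\theta\mid0)-I(\theta\mid1))$ is bounded, so the strong‑convexity hypothesis ($\min_g c_g$ large) makes $s'\mapsto\gamma_\theta(s')-\tfrac{c_g}{2}(s'-s)_+^2$ strongly concave with positive derivative at $s'=s$. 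Hence every worker strictly improves and $s^\ast_g(s)$ is uniquely characterized by the first‑order condition $\gamma_\theta'(s^\ast_g)=c_g(s^\ast_g-s)$.

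For the first half (group‑blind policies cannot equate responses) I would split on the two discrimination regimes. If only the ex‑ante skill laws differ, then $c_A=c_D$, so the best‑response map $s^\ast$ is common to both groups; a standard comparative‑statics argument (implicit differentiation of the FOC, or comparing the concave objectives at the two maximizers) shows $s^\ast$ is strictly increasing and continuous, so the push‑forward $S'=s^\ast(S)$ is stochastically dominated in the same order as $S$, and composing with the strictly increasing $\sigma$ gives $m_{\mathrm{res}}(f,A)=\Ex[\sigma(S')\mid G{=}A]>\Ex[\sigma(S')\mid G{=}D]=m_{\mathrm{res}}(f,D)$ (strict provided the dominance is not a tie). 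If instead $c_A<c_D$ with $S\mid G{=}A\deq S\mid G{=}D$, I would show $s^\ast_A(s)>s^\ast_D(s)$ pointwise: evaluating the derivative of group $A$'s concave objective at $t=s^\ast_D(s)>s$ yields $\gamma_\theta'(t)-c_A(t-s)=(c_D-c_A)(t-s)>0$, so $A$'s maximizer lies strictly above $D$'s; coupling the two groups through a common draw of $S$ and using monotonicity of $\sigma$ again gives $m_{\mathrm{res}}(f,A)>m_{\mathrm{res}}(f,D)$. Either way equality of responses fails.

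For the second half (equality of outcomes with respect to $m_{\mathrm{FPR}}$ and $m_{\mathrm{FNR}}$ forces a common threshold) the key observation is structural: the model specifies $X'\mid(S',Y'{=}y)$ to have law $I(\cdot\mid y)$, so $X'\ind S'\mid Y'$, and therefore for \emph{any} ex‑post distribution $\cD(f,g)$ one has $m_{\mathrm{FPR}}(f,g)=\Pr\{X'\ge\theta_g\mid Y'{=}0\}=1-I(\theta_g\mid0)$ and $m_{\mathrm{FNR}}(f,g)=\Pr\{X'<\theta_g\mid Y'{=}1\}=I(\theta_g\mid1)$, independent of the population's skill distribution. Equality of outcomes with respect to these two metrics thus reads $I(\theta_A\mid0)=I(\theta_D\mid0)$ and $I(\theta_A\mid1)=I(\theta_D\mid1)$; since $\Phi(\cdot\mid s)=\sigma(s)I(\cdot\mid1)+(1-\sigma(s))I(\cdot\mid0)$ is a strictly increasing CDF, at least one of $I(\cdot\mid0),I(\cdot\mid1)$ is strictly increasing near any given threshold, which forces $\theta_A=\theta_D$. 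Combining the two halves: under either form of discrimination a policy satisfying equality of outcomes w.r.t. $m_{\mathrm{FPR}},m_{\mathrm{FNR}}$ is group‑blind and hence cannot achieve equality of responses.

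The step I expect to be the main obstacle is the monotone comparative statics for $s^\ast$ together with transporting the resulting ordering through $\sigma$ to the productivity metric $m_{\mathrm{res}}=\Ex[\sigma(S')]$; one has to handle corner solutions (workers with no incentive to improve) and be explicit about the regularity actually needed --- strict monotonicity of $\sigma$ and of $I(\cdot\mid y)$, informativeness of the assessment, and enough smoothness to use the FOC. A second point requiring care is that the second half genuinely uses the restriction to threshold policies: for an unrestricted $f(\cdot,g)$, $m_{\mathrm{FPR}}$ and $m_{\mathrm{FNR}}$ only determine $\int f(\cdot,g)\,dI(\cdot\mid0)$ and $\int f(\cdot,g)\,dI(\cdot\mid1)$, which does not imply $f(\cdot,A)=f(\cdot,D)$. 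Finally I would make sure the hypothesis on ex‑ante skill disparity is stated as a strict (non‑degenerate) stochastic dominance, so that the conclusion $m_{\mathrm{res}}(f,A)\ne m_{\mathrm{res}}(f,D)$ is genuinely nonzero.
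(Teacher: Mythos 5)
Your proposal is correct and follows essentially the same route as the paper's proof: restrict to threshold policies, show the workers' best response is monotone in $s$ (so stochastic dominance of ex-ante skills is preserved ex post) for case (a), show the best response is pointwise larger for the cheaper-cost group for case (b), and derive the group-blindness claim from $X'\ind G\mid Y'$. Your treatment of two steps is in fact more explicit than the paper's --- the direct comparison of the two groups' first-order conditions at $s^\ast_D(s)$ (the paper instead differentiates the response with respect to $c$), and the observation that $m_{\mathrm{FPR}},m_{\mathrm{FNR}}$ reduce to $1-I(\theta_g\mid 0)$ and $I(\theta_g\mid 1)$ (which the paper states without derivation) --- but these are refinements of the same argument, not a different one.
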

\subsection{Alternative performative models}
We present the preceding negative results to emphasize that simultaneously achieving equality of responses and equality of treatment with respect to various fairness metrics is not trivial. To overcome the difference \emph{ex-ante} between workers in the two groups, employers must offer additional incentives to the \emph{ex-ante} least skilled group to close the skill gap and achieve equal responses. Unfortunately, this prevents them from treating workers from the two groups equally because employers only have a single degree of freedom (the hiring threshold $\theta$). This suggests that it may be possible for employers with more degrees of freedom to equalize \emph{ex-post} response distributions with an (\emph{ex-post}) fair hiring policy.

In this section, we introduce two models, the first is a generic model of causal strategic classification inspired by the work \citep{shravit-caus-strat-LR-2020}. The second is a modification of the causal strategic classification model to better model labor markets. The causal strategic classification model provides a simplified framework for theoretical questions on feasibility and generalization, while also demonstrating that our fairness framework is applicable to a wide variety of learning settings.

\begin{example}[causal strategic classification \cite{shravit-caus-strat-LR-2020}]
    \label{ex: causal strategic classification}
    Consider a learning setting, in which samples correspond to strategic agents that posses features and a sensitive trait $(X, {G}) \in \mathbb{R}^d \times \{A,D\}$. Conditioned on sensitive trait membership ${G}$, features $X$ are generated from the \emph{ex-ante} distribution 
    \[X|G \deq P_G.\] Conditioned on the features $X$, the agent responses $Y \in \{0,1\}$ are generated via the Bernoulli variable
\[Y|X \deq  \text{Ber}[\sigma(\beta^TX)].\] 
Note that this implies $Y \ind G | X$. The learner wishes to accurately classify the agents using the features and sensitive trait. As such, the learner deploys predictions $f(X, G) \in \{0,1\}$ generated with the model
\[f(X, G)| X, G \deq \text{Ber}[\sigma(\theta_G^T X)]. \] 
In response to model choice $\theta_g$, an agent (with trait $g$ and \emph{ex-ante} features $x$) is allowed to take some action $a$ to improve their standing with the learner (at cost $C_g(a)$). The agents act rationally, optimizing their utility:
\[a({\theta}_g) = \argmax_{a' \in \mathbb{R}^k} [{\theta}_g]^T [x + M_{d \times k}a'] - C_g(a')\]
Upon selecting action $a({\theta}_g)$ the agent's features are \emph{ex-post} $ x' = x + Ma({\theta}_g)$. The matrix $M$ is an \emph{effort conversion} matrix, encoding the improvement of the feature $x_i$ from the action $a_j$ for each $i, j \in [d] \times [k]$.  

At the population level, the ex-post feature distribution $X'$ for the group $g$ is given by
\[ P'_g \deq T_{\#}(P_g; \theta_g, M, C_g); \text{ where }T(x; \theta_g, M, C_g) = x + Ma(\theta_g).\]
Conditioned on $G$ and $X$ \emph{ex-post} responses are generated by 
\[Y'|X' \deq  \text{Ber}[\sigma(\beta^TX'),\] 
and ex-post predictions are generated by
\[f(X', G)| X', G \deq \text{Ber}[\sigma(\theta_G^T X')]. \]
To prevent arbitrary inflation of all agent outcomes, the learner is subject to regularization penalty $||\theta||_2^2$. From the above, the learners ex-post risk is
\[\textstyle R(\theta) = \sum_{g\in\mathcal{G}} \lambda_g [\Pr(f(X', G) \neq Y'|G=g)+||\theta_g||_2^2].\]  
\end{example}
To better model a labor market, we modify example \ref{ex: causal strategic classification}. The learning setting will correspond to a labor market and the strategic agents will correspond to workers. The key difference is that, rather than features, workers now have a profile of latent skills, and each skill contributes to the productivity of a worker. Consequently, firms now view a noisy measurement of each skill based on a factor model. This provides workers with multiple ways of investing in their human capital and firms with flexibility in their hiring policies. As we shall see, this additional flexibility is crucial for the enforcement of equality of treatment.
\begin{example}[Modified Labor Market Model]
\label{ex: Modified Labor Market}
Consider the causal strategic classification set up (example \ref{ex: causal strategic classification}). In the context of a labor market, the learner corresponds to a hiring firm, and each strategic agent is a worker. Workers are encoded by pairs $(S, {G}) \in \mathbb{R}^d \times \{A,D\}$, with $S$ corresponding to a \textbf{latent} skill profile; as before $S \not \ind G$ and we say $S|G=g \deq P_g$. The productivity of the workers in the group $g$ is generated by $Y|G=g \sim \text{Ber}[\sigma(\beta^T S)]; {} S \sim P_g$. Rather than observing latent skill profiles, the hiring firm views interview outcomes generated by $X = \Lambda S + \epsilon$, with $\Lambda \in \mathbb{R}^{p \times d}$ a matrix of factor loadings.
This skill assessment model is motivated by item response theory (IRT) models of test outcomes \cite{lord1980Applications}. The results of the interviews are used to make hiring decisions through policy $f(X, G)|X,G \sim \text{Ber}[\sigma(\theta_G^T X)].$

A worker in group $g$ with an initial skill profile $s$ can take action (possibly training, studying, or additional education) in response to $\theta_g$ through the causal strategic classification response mechanism
\[a({\theta}_g) = \argmax_{a' \in \mathbb{R}^k} [\Lambda^T{\theta}_g]^T [s + M_{d \times k}a'] - C(a', g).\]
The ex-post skill profiles are $S' = S + Ma(\theta_g), S \sim P_g$, which propagates to the ex-post interviews $X'$, productivity $Y'$ and hiring decisions $f(X', g)$. The firm seeks to maximize some (regularized) ex-post reward/profit
\[R(\theta) = \sum_{g\in G} \lambda_g \mathbb{E}[r(S', X', \theta_g)|G=g] - ||\theta_g||^2\]
\end{example}
\subsection{Feasibility of Equality of Treatment in Alternative Models}
In contrast to the models in example \ref{ex: continuous labor market model}, in the causal strategic classification setting (and the alternative labor market mode)l, the set of policies that enforces equality of treatments (and thus equality of responses by \ref{prop:joint-dist-ind}) is non-empty. In fact, it contains a stratified manifold of dimension $O(d)$, so the set is quite large in some sense. We will study this set in two scenarios: correcting \emph{ex-ante} feature/skill disparities and correcting cost-of-improvement disparities. For the purposes of a theoretical analysis, we operate under some simplifying assumptions.
\begin{assumption}
\label{ass: labormarket-simplify}
\phantom{newline}
    \begin{enumerate}
        \item The agent (worker) cost is quadratic: $C(a,g) = \frac{c_g}{2} ||a||_2^2$, the effort matrix $M$ is of the form $M = \text{diag}[{B}]; {B} \in \{0,1\}^d$.
        \item The ex ante features (latent skill profiles) and interview outcome distributions are Gaussian. In Example \ref{ex: causal strategic classification} $X | G  \deq \mathcal{N}(\mu_G, I)$ while in example \ref{ex: Modified Labor Market} $S | G  \deq \mathcal{N}(\mu_G, I)$ and $\epsilon \deq \mathcal{N}(0, I)$.
    \end{enumerate}
\end{assumption}
The quadratic cost assumption is standard in the strategic learning literature, \citep{shravit-caus-strat-LR-2020, izzo2021How, jagadeesan2023SupplySide}. The effort matrix is of the form $\text{diag}[B]; B \in \{0,1\}^d$, if each skill is improved by a distinct action and only \emph{some} skills can be improved. Each assumption (including normality) is primarily for mathematical convenience; we expect that feasibility will hold under a wide class of choices for $C(a), \Lambda, M$ and measures on $S, \epsilon$.

Under assumption \ref{ass: labormarket-simplify}, the feasibility of equality of treatment can be studied by analyzing two disjoint constraint sets, one that pertains to parameters that correspond to ``manipulable" features and another that pertains to ``nonmanipulable" features, which we now define. 
\begin{definition}[Manipulable features]
    Given an effort matrix $M = \text{diag}[B]; B \in \{0,1\}^d$, and a general feature vector $v \in \reals^{p}$, we let $v_m = \{v_i \in v; \mathbf{1}_{\{M_i = 1\}}\} $ and $v_u = \{v_i \in v; \mathbf{1}_{\{M_i = 0\}}\} $ be the manipulable and nonmanipulable features, respectively. 
\end{definition}

\noindent Additionally, we will assign $2d_m$, $2d_u$ as the dimensions of the parameter spaces $(\theta_{m, A}, \theta_{m, D}), (\theta_{u, A}, \theta_{u, D})$. 

\begin{theorem}
\label{thm: ex-ante discrimination}
   Consider the learning setting of example \ref{ex: causal strategic classification} with the minor assumption that the vectors $\{(\mu_{A,m}, -\mu_{D,m}), (\beta_m, -\beta_m)\}$ are not co-linear, and the vectors $\{(\mu_{A,u}, -\mu_{D,u}), (\beta_m, -\beta_m)\}$ are not co-linear. Suppose that one of the two following forms of discrimination is present:
   \begin{enumerate}
       \item Ex-ante distribution discrimination: $c_A = c_D = 1$, but $\mu_A^T \beta > \mu_D^T \beta $, 
       \item  Cost of improvement discrimination: $\mu_A = \mu_D = 0$ but $c_A < c_D$.
   \end{enumerate}
Then under (1) or (2) there exist stratified manifolds $\cM_m$, $\cM_u$ such that $\text{dim}[\cM_u] = d_u-2$, and $\text{dim}[\cM_m] = d_m-2$  and any learner decision $\theta = (\theta_A, \theta_D)$ that satisfies $(\theta_{A,m}, \theta_{D,m}) \in \cM_m$ and $(\theta_{A,u}, \theta_{D,u}) \in \cM_u$ also satisfies equality of treatment and equality of responses.
\end{theorem}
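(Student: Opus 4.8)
The plan is to use Proposition~\ref{prop:joint-dist-ind} to reduce ``equality of treatment and equality of responses'' to a finite polynomial system in $(\theta_A,\theta_D)$, and then to exhibit an explicit product family of solutions. First I would compute the workers' best response in Example~\ref{ex: causal strategic classification}: since $C(a,g)=\frac{c_g}{2}\|a\|_2^2$ is strongly convex, the optimal action is $a(\theta_g)=\frac1{c_g}M^T\theta_g$, so the ex-post feature is $X'=X+\frac1{c_g}MM^T\theta_g$; because $M=\diag[B]$ with $B\in\{0,1\}^d$ one has $MM^T=M$, hence under Assumption~\ref{ass: labormarket-simplify} $X'\mid G=g\deq\mathcal N\!\big(\mu_g+\frac1{c_g}M\theta_g,I\big)$, where $M\theta_g$ agrees with $\theta_g$ on the manipulable coordinates and is zero on the rest. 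Next, conditionally on $X'$ the pair $(f(X',G),Y')$ consists of two independent Bernoulli variables with means $\sigma(\theta_g^TX')$ and $\sigma(\beta^TX')$, and $\beta$ is common to the two groups; hence the law of $(Y',f(X',G))$ given $G=g$ is a fixed functional of the law of the bivariate Gaussian $(\theta_g^TX',\beta^TX')$. By Proposition~\ref{prop:joint-dist-ind} it is therefore enough that this bivariate Gaussian coincide for $g=A$ and $g=D$, i.e. (writing $m_g=\mu_g+\frac1{c_g}M\theta_g$) that
\[
\|\theta_A\|^2=\|\theta_D\|^2,\qquad \theta_A^T\beta=\theta_D^T\beta,\qquad \theta_A^Tm_A=\theta_D^Tm_D,\qquad \beta^Tm_A=\beta^Tm_D ,
\]
the remaining variance $\|\beta\|^2$ matching automatically.

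Second, I would split every vector into a manipulable part (subscript $m$) and a nonmanipulable part (subscript $u$). Since $\Cov(X')=I$ is block diagonal, $(\theta_g^TX',\beta^TX')$ is the sum of two independent bivariate Gaussians, one built from $X'_{g,m}\deq\mathcal N(\mu_{g,m}+\frac1{c_g}\theta_{g,m},I)$ and one from the unsteerable $X'_{g,u}\deq\mathcal N(\mu_{g,u},I)$. It then suffices that each block's contribution to each of the four statistics agree across $g$, which turns the display into a system purely in $(\theta_{A,m},\theta_{D,m})$ together with a system purely in $(\theta_{A,u},\theta_{D,u})$; the equation $\beta^Tm_A=\beta^Tm_D$ contains no steerable nonmanipulable parameter, so it is absorbed entirely into the first system (with $\beta_u^T\mu_{g,u}$ carried along as a constant).

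Third, I would build $\cM_m$ and $\cM_u$ explicitly by parametrizing the leftover freedom in each block and solving. In the $m$-block the steerable ex-post mean gives enough room to reduce the block's share of the four equations to two scalar polynomial constraints on a parameter vector in $\reals^{d_m}$ (for instance a common ex-post mean target, displaced in the $\beta_m$-direction to absorb the $\beta^Tm$ equation and the frozen $u$-contribution); the stated non-colinearity of $\{(\mu_{A,m},-\mu_{D,m}),(\beta_m,-\beta_m)\}$ is precisely what forces those two constraints to be independent, so $\cM_m$ has dimension $d_m-2$, and non-emptiness follows from whichever form of discrimination is active: in case (1) the positivity of $\mu_A^T\beta-\mu_D^T\beta$ furnishes a real solution through a compensating $\beta_m$-displacement that differs between the groups, and in case (2) the gap $c_D-c_A>0$ plays the analogous role with $\mu_A=\mu_D=0$. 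The $u$-block is handled symmetrically, the only change being that its mean is frozen, so the parametrization pairs $\theta_{D,u}\in\reals^{d_u}$ with a forced displacement of $\theta_{A,u}$ proportional to $\beta_u$; two residual scalar equations remain, independent by the corresponding non-colinearity, giving $\dim\cM_u=d_u-2$. Both solution sets are real algebraic varieties cut out by quadratics, hence finite unions of smooth strata (stratified manifolds), and I would record the top-dimensional one. Finally, any $\theta$ with $(\theta_{A,m},\theta_{D,m})\in\cM_m$ and $(\theta_{A,u},\theta_{D,u})\in\cM_u$ satisfies all four equations of the reduction, so $(\theta_A^TX',\beta^TX')$ has the same law under both groups; hence $(Y',f(X',G))\ind G$, and Proposition~\ref{prop:joint-dist-ind} gives equality of treatment and equality of responses.

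The hard part will be the third step: producing a genuine \emph{product} $\cM_m\times\cM_u$ inside the feasible variety while simultaneously (i) respecting the asymmetry that the nonmanipulable block's mean cannot be moved and must be compensated through the manipulable block, (ii) certifying non-emptiness separately in the ex-ante-disparity and the cost-disparity regimes, and (iii) using the two non-colinearity hypotheses exactly to guarantee that the pair of residual scalar equations in each block is independent, so that the dimensions come out precisely $d_m-2$ and $d_u-2$ rather than empty or of higher codimension. Checking the Whitney-stratified structure of the resulting quadratic varieties is routine but must be stated.
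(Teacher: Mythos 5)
Your reduction is exactly the paper's: the closed-form best response $a(\theta_g)=\frac{1}{c_g}M\theta_g$, the observation that it suffices to match the law of the bivariate Gaussian $(\theta_g^\top X',\beta^\top X')$ across the two groups (the paper isolates this as a standalone lemma, proved by the tower property), and the split of the resulting mean/covariance equations into a manipulable block and a nonmanipulable block, with the scalar equation $\beta^\top m_A=\beta^\top m_D$ absorbed entirely into the manipulable block, all match the paper's argument. Your four-equation system is the necessary-and-sufficient version of the paper's five sufficient conditions (the paper splits $\theta_A^\top m_A=\theta_D^\top m_D$ into two separate equalities), which is harmless.

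The gap is in your third step, which you yourself flag as the hard part and leave as a plan. The substantive content of the theorem is the explicit construction of $\cM_m$ and $\cM_u$, and the device the paper uses is absent from your sketch: the quadratic constraint $\|\theta_{m,A}\|^2=\|\theta_{m,D}\|^2$ holds if and only if $\theta_{m,A}=U\theta_{m,D}$ for some orthogonal $U$, so for each admissible $U$ the remaining two equations become an inhomogeneous linear system restricted to the $d_m$-dimensional subspace $\mathrm{Null}[I,-U^\top]$, yielding an affine stratum of dimension exactly $d_m-2$; this is where the non-colinearity hypothesis enters, to give the restricted system full row rank and hence both independence of the two equations and consistency of the system. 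The stratified manifold is the union of these affine strata over admissible $U$. In the cost-disparity case the paper instead parametrizes by scalars $(k_1,k_2)$ and realizes each stratum as a sphere--hyperplane intersection, nonempty exactly when $k_1$ is large enough relative to $|k_2|/\min(\|\beta_u\|,\|\beta_m\|)$. Your fallback --- that the solution set is a real algebraic variety and hence a finite union of smooth strata --- does not by itself deliver non-emptiness or the stated dimension: the full variety in $\reals^{2d_m}$ cut by one quadratic and two affine equations would generically have dimension $2d_m-3$, so the claimed $d_m-2$ only emerges after a parametrization that enforces the norm equality, which is precisely the step you have not supplied.
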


\begin{corollary}
    \label{corr:ex-ante discrim Lambda}
    Consider the modified labor market model (example \ref{ex: Modified Labor Market}). Assume that worker discrimination of the form (1) or (2) is present.  Then if $d_m = d_u \approx d/2$ there exists a stratified manifold  $\cM$ of dimension $\cO(d)$
    such that any $\theta = (\theta_A, \theta_D) \in \cM$ satisfies equality of treatment.
\end{corollary}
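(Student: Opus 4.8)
The plan is to reduce Example~\ref{ex: Modified Labor Market} to the causal strategic classification model of Example~\ref{ex: causal strategic classification} \emph{on the latent skill space}, apply Theorem~\ref{thm: ex-ante discrimination} there, and pull the feasible manifolds back through the factor–loading map $\Lambda^T$. First I would compute the worker best response under Assumption~\ref{ass: labormarket-simplify}. With $C(a,g)=\frac{c_g}{2}\|a\|_2^2$ and $M=\diag[B]$, $B\in\{0,1\}^d$ (so $M^T=M=M^2$), the first-order condition for $a(\theta_g)=\argmax_{a'}(\Lambda^T\theta_g)^T(s+Ma')-\frac{c_g}{2}\|a'\|_2^2$ yields $a(\theta_g)=c_g^{-1}M\Lambda^T\theta_g$, hence the ex-post skill law $S'\mid G=g\sim\cN(\mu_g+c_g^{-1}M\tilde\theta_g,\,I)$ with $\tilde\theta_g\triangleq\Lambda^T\theta_g\in\reals^d$. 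This is exactly the ex-post feature law of Example~\ref{ex: causal strategic classification} with policy parameter $\tilde\theta_g$, the productivity channel $Y'\mid S'\sim\Ber[\sigma(\beta^TS')]$ coincides, and the two forms of discrimination in the corollary are conditions on $(\mu_g,c_g)$ that transfer verbatim.

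The single mismatch is the hiring decision. Since $X'=\Lambda S'+\epsilon'$ with $\epsilon'$ independent of $S'$, we have $\theta_g^TX'=\tilde\theta_g^TS'+\theta_g^T\epsilon'$ with $\theta_g^T\epsilon'\sim\cN(0,\|\theta_g\|_2^2)$, so integrating out $\epsilon'$ gives $f(X',g)\mid S',g\sim\Ber[\bar\sigma_{v_g}(\tilde\theta_g^TS')]$, where $\bar\sigma_v\triangleq\sigma\ast\cN(0,v)$ is a Gaussian-smoothed logistic link and $v_g=\|\theta_g\|_2^2$. Two facts make this harmless on the locus $\{\|\theta_A\|_2=\|\theta_D\|_2\}$. (i) There $\bar\sigma_v$ is common to both groups and inherits every structural property the proof of Theorem~\ref{thm: ex-ante discrimination} uses: it is smooth, strictly increasing, $(0,1)$-valued, and odd-symmetric, $\bar\sigma_v(-t)=1-\bar\sigma_v(t)$ (a convolution of an odd-symmetric function with a symmetric kernel). (ii) Because $Y'\ind(X',f(X',G))\mid S'$, the joint law of $(Y',f(X',G))\mid G=g$ is a fixed functional of the bivariate Gaussian law of $(\beta^TS',\tilde\theta_g^TS')$ and of $v_g$ alone; the manifolds $\cM_m,\cM_u$ produced by Theorem~\ref{thm: ex-ante discrimination} (in the variable $\tilde\theta$) are built by matching exactly the first two moments of that bivariate Gaussian across $g$, so on $\cM_m\times\cM_u$ the law is $G$-independent, and together with $\|\theta_A\|_2=\|\theta_D\|_2$ the whole functional becomes $G$-independent. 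By Proposition~\ref{prop:joint-dist-ind} this is equality of treatment (and equality of responses); the required non-colinearity hypotheses are exactly the stated ones on $\mu_{\cdot,m},\mu_{\cdot,u},\beta$.

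It remains to transport the feasible set to the firm's parameters $\theta\in\reals^{2p}$ and count dimensions. Assume $\Lambda$ has full column rank $d$ (hence $p\ge d$; the natural ``sufficiently informative interview'' hypothesis, in the spirit of the non-degeneracy assumptions already in force), so that for each group the fiber $\{\theta_g:\Lambda^T\theta_g=\tilde\theta_g\}$ is an affine subspace of dimension $p-d$. Set $\cM\triangleq\{(\theta_A,\theta_D):(\tilde\theta_{A,m},\tilde\theta_{D,m})\in\cM_m,\ (\tilde\theta_{A,u},\tilde\theta_{D,u})\in\cM_u,\ \|\theta_A\|_2=\|\theta_D\|_2\}$ with $\tilde\theta_g=\Lambda^T\theta_g$. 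Then $\cM$ is a stratified manifold fibering over $\cM_m\times\cM_u$ with $(p-d)$-dimensional fibers in each group, cut by one further equation, so $\dim\cM=(d_m-2)+(d_u-2)+2(p-d)-\cO(1)$; with $d_m=d_u\approx d/2$ (which also keeps both $\cM_m,\cM_u$ positive-dimensional) and $p=\Theta(d)$ this is $\cO(d)$. Every $\theta\in\cM$ induces $\tilde\theta_g=\Lambda^T\theta_g$ lying in the manifolds above together with $\|\theta_A\|_2=\|\theta_D\|_2$, so by the previous paragraph $\theta$ satisfies equality of treatment, as claimed.

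The main obstacle is precisely the interview noise $\epsilon$: it prevents Example~\ref{ex: Modified Labor Market} from being literally an instance of Example~\ref{ex: causal strategic classification} and forces the effective hiring link to be the smoothed logistic $\bar\sigma_v$, whose shape depends on $\|\theta_g\|_2$. The two points needing genuine care are (i) verifying that $\bar\sigma_v$ retains \emph{exactly} the monotonicity, smoothness, range, and odd-symmetry that the proof of Theorem~\ref{thm: ex-ante discrimination} relies on, so the manifold construction carries over unchanged, and (ii) checking that imposing $\|\theta_A\|_2=\|\theta_D\|_2$ equalizes the two group links at the cost of only one dimension. A secondary technical point is the full-column-rank hypothesis on $\Lambda$, which is what makes $\theta\mapsto\Lambda^T\theta$ a submersion so the fiber-dimension bookkeeping is exact; degenerate loadings would confine $\tilde\theta_g$ to a proper subspace of $\reals^d$ and could shrink or even empty the feasible set.
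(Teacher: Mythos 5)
Your proposal is correct and follows essentially the same route as the paper: compute the worker best response, observe that the joint law of $(Y', f(X',G))\mid G$ is determined by the bivariate Gaussian law of $(\beta^T S', \theta_g^T X')$, reduce the moment-matching constraints to those of Theorem~\ref{thm: ex-ante discrimination} via the change of variables $\tilde\theta_g = \Lambda^T\theta_g$, absorb the interview noise by additionally imposing $\|\theta_A\|_2 = \|\theta_D\|_2$, and pull the feasible manifold back through $\Lambda^T$ using its full-rank property to count dimensions. Your explicit full-column-rank hypothesis on $\Lambda$ and the fiber-dimension bookkeeping make precise what the paper leaves implicit, and the structural properties of the smoothed link $\bar\sigma_v$ in your point (i) are not actually needed, since point (ii) (equality in distribution of the pre-discretized pair) already suffices.
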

Theorem \ref{thm: ex-ante discrimination} and corollary \ref{corr:ex-ante discrim Lambda} state that in the causal strategic classification model/modified labor market model, the set of policies that enforce long-term fairness and correct differences in worker skills or costs contains a manifold of dimension $\cO(d)$. Here, $d$ is interpreted as the ``number of skills" a worker can possess. This result clarifies the importance of flexibility in human capital investment. If $d$ is not large enough, then the feasible subsets of Theorem \ref{thm: ex-ante discrimination} will be either too small to provide policy makers flexibility or even empty in extreme cases. The assumptions of theorem \ref{thm: ex-ante discrimination} also provide an important form of \emph{policy maker} flexibility; the existence of skills that are immune to performative effects implies the existence of policy parameters that can be adjusted to change error rates between groups \emph{without} impacting downstream responses.

Open questions on the feasibility of equality of treatment in labor markets with multiple forms of discrimination or continuous outcomes remain. The assumption that only one form of discrimination is present (cost of education discrimination \emph{or} \emph{ex-ante} skill discrimination) is necessary for our analysis but not necessary for feasibility (see Figure \ref{fig:experiment} for a numerical example). The discrete nature of worker productivity and firm hiring decisions is also not strictly necessary; for example. The argument of theorem \ref{thm: ex-ante discrimination} immediately implies feasibility (assuming Gaussian features) in the Causal Strategic Least Squares model posited in \citep{shravit-caus-strat-LR-2020}. This model is both the inspiration for our labor market model and can also be interpreted in the context of a labor market with continuous worker productivity.
\section{A reduction algorithm for equality of outcomes}
In practice, a policy maker often does not have complete knowledge of the \emph{ex-ante} and \emph{ex-post} distributions but instead only observes some samples from the \emph{ex-ante} distribution and has some model for how individuals respond to their policy. We turn to the question of implementing equality of outcomes under such conditions, providing a reduction algorithm (inspired by \citet{agarwal2018Reductions}) adapted to the performative setting. By proposition \ref{prop:joint-dist-ind} a policy maker can implement equality of treatment and equality of responses by enforcing the independence of the joint distribution $(Y', f(X',G))$ and $G$. We propose that the policymaker enforce this through a series of moment inequality constraints:
\[M\mu(f) \leq c,\]
\[\mu(f)_{ij} = \mathbb{E}[h_i(f(X'), Y')|G = g_j].\]
Throughout this section, we will work with the causal strategic classification setting (example \ref{ex: causal strategic classification}) in the two group setting. In this model, equal responses and equal treatment can be enforced through a series of 6 moment constraints.
\begin{example}
\label{ex: moment-binary-class}
    Consider the causal strategic classification example \ref{ex: causal strategic classification} with two possible groups $ \{A, D\}$. In this setting, the condition $(Y', f(X',G)) \ind G$ is equivalent to the constraint:
    \[\begin{pmatrix}
    1 & -1 & 0 & 0 & 0 & 0\\
    -1 & 1 & 0 & 0 & 0 & 0\\
    0 & 0 & 1 & -1 & 0 & 0\\
    0 & 0 & -1 & 1 & 0 & 0\\
    0 & 0 & 0 & 0 & 1 & -1\\
    0 & 0 & 0 & 0 & -1 & 1\\
\end{pmatrix}\\
\begin{pmatrix}
    \mathbb{E}[Y'|A] \\
    \mathbb{E}[Y'|D] \\
    \mathbb{E}[f(X',G)|A] \\
    \mathbb{E}[f(X',G)|D] \\
    \mathbb{E}[f(X',G)Y'|A] \\
    \mathbb{E}[f(X', G)Y'|D] \\
\end{pmatrix} \leq 0_6,\]
while the \emph{ex-post} risk is given by
\[\textstyle\text{EPR}(\theta) = \sum_{g\in\{A, D\}} \lambda_g [\Pr_{\cD(f,g)}(f(X',g) \neq Y')+||\theta_g||_2^2].\]
\end{example}
The case of multi-class classification or multiple sensitive traits is relatively similar. For multiclass classification, enforcement of the independence requirement will require the addition of higher-order moment constraints, and moment constraints may simply be repeated for each possible sensitive trait combination in the case of multiple sensitive traits.

Recall that the policy maker is vested in minimizing some \emph{ex-post} risk, therefore, in aggregate, the policymaker must solve a constrained optimization problem in $f$. The primal problem is
\begin{equation}\textstyle
\label{eq: ex post lagrangian}
\cL(f; \lambda) = \min_{f \in \cF} \max_{\lambda \geq 0} \text{EPR}(f) + \lambda^T(M\mu(f) - c).
\end{equation}
In practice, the policymaker only observes samples $\{Z_i\}_{i=1}^n$ from the \emph{ex-ante} distribution. We will assume that the policymaker has access to some correctly specified model of $\mathcal{D}(f,g)$ at the sample level. For example, a hiring firm in a labor market would need to be aware of each worker's cost-adjusted utility optimization problem. Using \emph{ex-ante} samples $\{Z_i\}_{i=1}^n$ and knowledge of $\cD(f)$, the policymaker can obtain the natural empirical estimates of $\text{EPR}(f)$ and $\mu(f)$ (denoted $\hat{\text{EPR}}(f)$ and $\hat{\mu}(f)$). As an example of obtaining estimates for $\text{EPR}(f)$ and $\mu(f)$ from $\cD(f)$ we return to the modified labor market model.
\begin{example}[Equality of outcomes estimates in causal strategic classification]
\label{ex: estimators}
Recall the setting of Example \ref{ex: causal strategic classification}. Consider an agent with sensitive trait $G=g$ and ex-ante features $x$, upon viewing policy $\theta_g$, the agent invests in their own features via $x' = x +Ma(\theta_g, g).$
Given an \emph{ex-ante} sample of skill features from group $g$, $\{x_{i}\}_{i=1}^n$ a reasonable choice of estimates for $\text{EPR}(f)$ and $\mu(f)$ are
\[\textstyle\widehat{\mathbb{E}[Y'|G=g]} = \frac{1}{n}\sum_{i=1}^n \sigma(\beta^T(x_{i}+ Ma(\theta_g,g))),\]
\[\textstyle\widehat{\mathbb{E}[f(X', G)|G=g]} = \frac{1}{n}\sum_{i=1}^n \sigma(\theta_g^T(x_{i}+ Ma(\theta_g,g))),\]
\[\textstyle\widehat{\mathbb{E}[f(X', G)Y'|G=g]} = \frac{1}{n}\sum_{i=1}^n \sigma(\beta^T(x_{i}+ Ma(\theta_g,g)))\sigma(\theta_g^T(x_{i}+ Ma(\theta_g,g))),\]
\[\textstyle\widehat{\text{EPR}}(\theta_g) = \frac{1}{n}\sum_{i=1}^n \sigma(\theta_g^T(x_{i}+ M_ga(\theta_g,g)))(1-\sigma(\beta^T(x_{i}+ M_ga(\theta_g,g))))\] \[+ (1-\sigma(\theta_g^T(x_{i}+M_ga(\theta_g,g))))(\sigma(\beta^T(x_{i}+ M_ga(\theta_g,g)))).\]
\end{example}

\noindent After attaining estimates $\hat{\mu}(f)$ and $\widehat{\text{EPR}}(f)$, equation \ref{eq: ex post lagrangian} can be replaced with said estimates. Furthermore, for convergence reasons, a $L_1$ norm constraint is placed on the dual variable $\lambda$. Finally, due to statistical error, a relaxation $\hat{c} = c + \nu$ is allowed on the moment constraint. If the learner instead opts to solve the dual problem (the justification for this is expanded upon in Appendix A), the final result is
\begin{equation}\textstyle
    \label{eq: sample dual}
    \cL(f; \lambda) =  \max_{\lambda \geq 0; ||\lambda||_1 \leq B} \min_{f \in \cF} \widehat{\text{EPR}}(f) + \lambda^T(M\hat{\mu}(f) - \hat{c}).
\end{equation}
From here, the iterates of the dual variables are obtained using mirror ascent on the dual variable with the potential function $\phi(\lambda) = -\lambda ln(\lambda)$, which algorithm 1 lays out explicitly.
\begin{algorithm}[H]
\caption{Reduction for Equality of Outcomes}
\label{alg: ex-post-reduction}
\begin{algorithmic}[1]
\Require Error tolerance $\epsilon$, step size $\eta$, samples $\{Z_i\}_{i=1}^n$, fairness constraints $M,\hat{c}, \mu$, initial iterate $v_0$
\For{$t=0, 1, \ldots$}
\State Scale dual: $\lambda_{k,t} \gets B\frac{e^{v_{k, t}}}{1+\sum_k e^{v_{k,t}}} $
\State Get policymakers best decision: $f_t \gets \argmin_{f\in \cF} \widehat{\text{EPR}}
(f) + \lambda_t^T(M\hat{\mu}(f) - \hat{c})$ 
\If{$(\lambda_t, f_t)$ is an $\epsilon$-saddle point}
\State \Return $({\lambda}_t, f_t)$
\Else
\State  Update iterates: $v_{t+1} \gets v_t + \eta_t(M\hat{\mu}(f_t) - \hat{c})$
\EndIf
\EndFor
\end{algorithmic}
\end{algorithm}

Two elements of the algorithm \ref{alg: ex-post-reduction}, are nontrivial: (i) the $\epsilon$-saddle point stopping criteria; (ii) the attainment of the best decision of the policy makers $f_t$.

\textbf{The $\epsilon$-approximate saddle point stopping criteria:}
A primal dual pair $(f_t, \lambda_t)$ is a $\epsilon$-saddle point if the following hold: 
\[\textstyle\mathcal{L}(f_t, {\lambda}_t) \leq \min_{f\in \cF }\cL(f, {\lambda}_t) + \epsilon\]
\[\textstyle\mathcal{L}(f_t, {\lambda}_t) \geq \max_{\lambda \geq 0; ||\lambda||_1 \leq B} \mathcal{L}(f_t, \lambda) - \epsilon\]
Checking the first criteria reduces to a problem in attainment of the policy makers best decision $f_t$\citep{agarwal2018Reductions}. The second requirement requires solving a linear program with an L1 inequality constraint, a well-studied problem \cite{boyd2004Convex}.

\textbf{Obtaining of the policymakers best decision $f_t$:} Attaining the best long-term policy for risk function $\widehat{\text{EPR}}
(f) + \lambda_t^T(M\hat{\mu}(f) - \hat{c})$ is generally a nontrivial problem. Previous works have established methods for obtaining the best policy $f$ under the assumption that the policy maker knows the map $\mathcal{D}(f)$ \citep{a-tale-of-two-shifts, levanon2021Strategic, somerstep2024learning}. Such methods are generally specific to a particular $\cD$, and since our focus is on fairness, we will assume that the policymaker has access to some oracle which produces such an $f$. This is a strong assumption, and our methodology is limited to performative maps that allow for such an oracle.

\subsection{Algorithm 1 in the Modified Labor Market Model}
As an application of Algorithm 1, we study the problem of enforcing equality of outcomes and equality of responses in the modified labor market (example \ref{ex: Modified Labor Market}) when both \emph{ex-ante} distributions \textbf{and} cost of education are different between each group. We assume that $\Lambda = I$, so that the firm has an unbiased estimate of each skill. Figure \ref{fig:experiment} demonstrates the performance of algorithm 1 on a held-out test set of workers. Prior work on long-term fairness studied the impact of enforcing standard fairness constraints in the long term, and, as such, we utilize this as a base line. Algorithm 1 is compared to policies that equalize one \emph{ex-ante} fairness metric, including false positive rate, false negative rate, demographic parity, and sufficiency.
\begin{figure}[H]
  \caption{Sufficiency + separation simultaneously}
  \begin{subfigure}[b]{0.24\textwidth}
    \includegraphics[width=\textwidth]{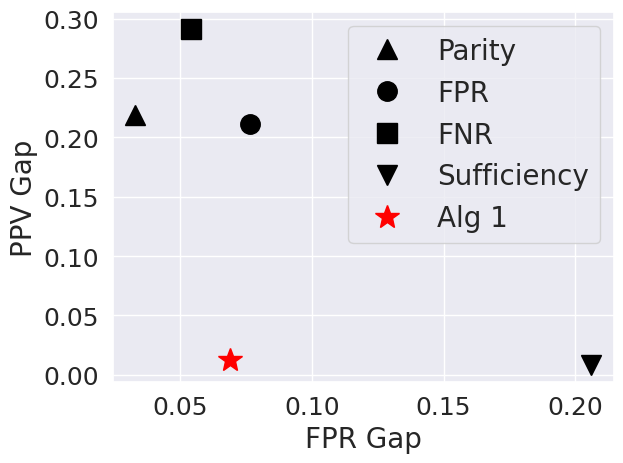}
    \caption{}
  \end{subfigure}
  \hfill
  \begin{subfigure}[b]{0.24\textwidth}
    \includegraphics[width=\textwidth]{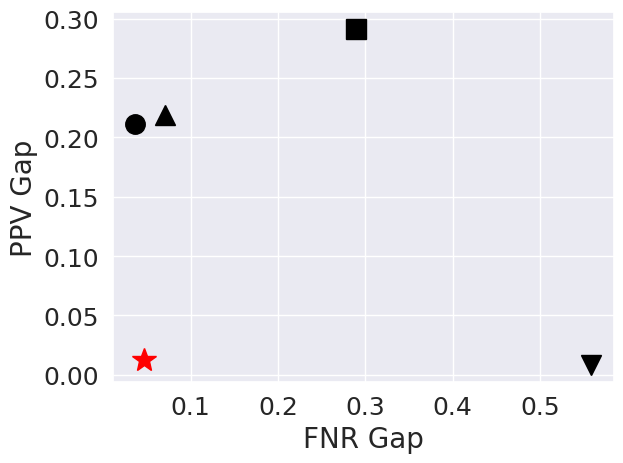}
    \caption{}
  \end{subfigure}
  \hfill
  \begin{subfigure}[b]{0.24\textwidth}
    \includegraphics[width=\textwidth]{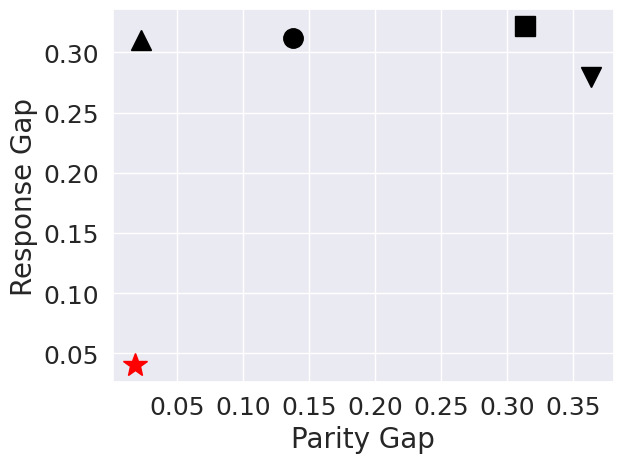}
    \caption{}
  \end{subfigure}
  \hfill
  \begin{subfigure}[b]{0.24\textwidth}
    \includegraphics[width=\textwidth]{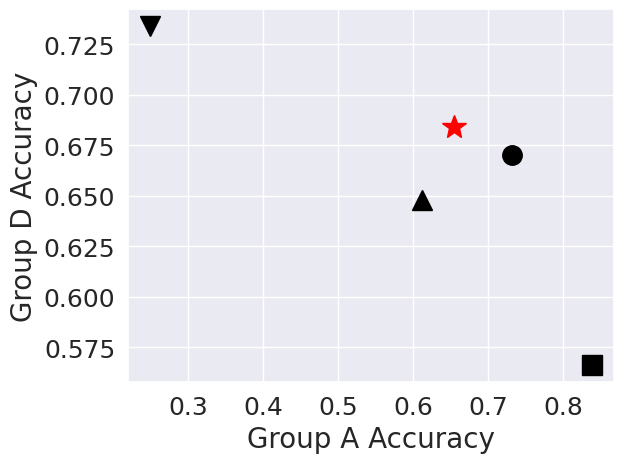}
    \caption{}
  \end{subfigure}
  \label{fig:experiment}
\end{figure}
Figure \ref{fig:experiment} \textbf{(a)} and \textbf{(b)} demonstrate that in the long term sufficiency and separation are compatible, while figure \ref{fig:experiment} \textbf{(c)} shows that in the modified labor market model, a policymaker can satisfy equality of responses with a \emph{fair} policy. Figure \ref{fig:experiment} \textbf{(d)} also demonstrates that enforcing equality of treatment and equality of responses will equate the error rate of a policy between the two groups.
Due to statistical error (exacerbated by the opaque nature of worker skills), perfect fairness is not achieved on the test set. We emphasize that this is not due to an issue of \emph{feasibility}, figure \ref{fig:feasibility exp} demonstrates that algorithm 1 can attain nearly zero fairness violation on the training set.
\section{Summary and discussion}
{In this paper, we studied fairness in performative settings in which the policymaker has the ability to steer the population. We showed that it is possible for the policymaker to remedy existing inequities in the population. In particular, we showed that by equating the distribution of responses $Y$ between groups in classification problems, it is possible for the policymaker to simultaneously satisfy multiple notions of group fairness that are generally incompatible in non-performative settings. However, we also showed that this is not always possible: if the policymaker does not have enough flexibility in how they can equalize base rates, then it is unfortunately impossible, even in performative settings, to resolve the longstanding incompatibilities between group fairness definitions. Another limitation of our approach is that the policymaker must be aware of the long-term impacts of their policies on the population. Although this requirement is necessary, it limits the applicability of the approach. One possible direction for future work is to develop methods that help the policymaker estimate the effects of their policies on the population. Such methods can be combined with our approach to steer sociotechnical systems towards more equitable states.

Our work is also aligned with the goals of algorithmic reform/reparations. By considering reform/reparations mathematically, we show that it is somewhat possible to achieve the goals of reform without unequal treatment. This is especially desirable in application domains in which unequal treatment is illegal or impractical. For example, consider the problem of underrepresentation of women in the tech sector, especially in technical roles \citep{dastin2018Amazon}. The authors of \citet{Davis-Reparation-2021} suggest that employers in the tech sector should adopt a ``reparative'' approach to equalize the representation of men and women, even if this entails explicitly discriminating against men. They justify explicit discrimination by appealing to the historical injustices that led to the dearth of women in the tech sector and the need to remedy such injustices. Although a discriminatory approach is likely to be limited by various labor laws, our results suggest that it may be possible to equalize the representation of men and women while treating men and women fairly. We hope that our results lead to more serious consideration of ``reparative'' approaches in algorithmic decision making.}
\begin{acks}
This paper is based upon work supported by the National Science Foundation (NSF) under grants no. 2027737 and 2113373.
\end{acks}

\bibliographystyle{abbrvnat}
\bibliography{YK,seamus}

\newpage
\onecolumn
\appendix

\section{Theoretical Properties of Algorithm \ref{alg: ex-post-reduction}}
In our development of the ex-post reduction algorithm (\ref{alg: ex-post-reduction}), we opted to solve the Dual problem rather than the primal problem, to achieve a fair policy. This is only justifiable if strong duality of problem \ref{eq: ex post lagrangian} holds. Unfortunately, moment constraints of the form \ref{ex: moment-binary-class} are often not convex and thus strong duality may not hold. To fix this issue (as is often done in other reduction methods), we can slightly generalize algorithm \ref{alg: ex-post-reduction} to allow for randomized policies $Q \in \Delta(\mathcal{F})$, that first select a policy $f$ at random (with $\Pr(Q = f) = Q(f)$), then make a prediction.

As long as group membership is independent of the policy selected, \ie{} for any $Q \in \Delta(\cF)$ the events $1\{G = g\}$ and $1\{Q=f\}$ are independent, one can show that $\mu(Q)$ and $\text{EPR}$ are linear in $Q$. 
\begin{proposition}
\label{prop: linearity}
    Suppose that all $Q \in \Delta(\cF)$ satisfy $Q \ind G$. Then the following holds for all $Q \in \Delta(\cF)$: 
    \begin{enumerate}
        \item $\mu(Q) = \sum_{f \in \cF} Q(f) \mu(f).$
        \item $\text{EPR}(Q) = \sum_{f \in \cF} Q(f)\text{EPR}(f)$
    \end{enumerate}
\end{proposition}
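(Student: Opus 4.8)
The plan is to first pin down the probabilistic semantics of a randomized performative policy, and then obtain both identities by a single application of the law of total expectation (conditioning on the realized policy), closing with the hypothesis $Q \ind G$. Concretely, I would model the deployment of $Q \in \Delta(\cF)$ as: sample $f \sim Q$ once (before agents respond), and then let the subpopulation with $G = g$ best-respond performatively to the realized policy $f(\cdot, g)$, so that conditionally on the event $\{Q = f\}$ and on $\{G = g\}$ the ex-post sample satisfies $Z' = (X', Y') \sim \cD(f, g)$. The stated hypothesis says precisely that the draw of the policy is independent of group membership, i.e. for every $f$ the events $\{Q = f\}$ and $\{G = g\}$ are independent; equivalently, the conditional law of $f$ given $G = g_j$ is the marginal $Q$. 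Granting this, the mixture identity $\cD(Q, g) = \sum_{f \in \cF} Q(f)\,\cD(f, g)$ holds by construction.

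Next I would prove claim (1) by fixing indices $i, j$ and conditioning on the realized policy. By the tower property,
\[
\mu(Q)_{ij} = \mathbb{E}\big[\,h_i(f(X'), Y') \mid G = g_j\,\big] = \mathbb{E}\big[\,\mathbb{E}[\,h_i(f(X'), Y') \mid Q = f,\, G = g_j\,]\;\big|\;G = g_j\,\big] = \mathbb{E}\big[\,\mu(f)_{ij} \mid G = g_j\,\big],
\]
where the inner expectation equals $\mu(f)_{ij}$ by definition, since conditionally on $\{Q = f\}$ and $\{G = g_j\}$ the ex-post data follows $\cD(f, g_j)$. Invoking $Q \ind G$ to replace the conditional law of $f$ given $G = g_j$ by $Q$ yields $\mu(Q)_{ij} = \sum_{f \in \cF} Q(f)\,\mu(f)_{ij}$; assembling over $i, j$ gives claim (1).

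For claim (2) the argument is the same but even cleaner, because the group weights $\lambda_g$ are fixed and the conditioning is only on $\{G = g\}$, whose distribution is unaffected by the policy draw. Expanding the definition of $\text{EPR}$ for the randomized policy and again conditioning on the drawn $f$,
\[
\text{EPR}(Q) = \sum_{g} \lambda_g\, \mathbb{E}\big[\,r(f(\cdot, g); Z')\,\big] = \sum_{g} \lambda_g\, \mathbb{E}_{f \sim Q}\big[\,\mathbb{E}_{Z' \sim \cD(f,g)}[\,r(f(\cdot, g); Z')\,]\,\big] = \mathbb{E}_{f \sim Q}\big[\,\text{EPR}(f)\,\big] = \sum_{f \in \cF} Q(f)\, \text{EPR}(f),
\]
using that interchanging the finite sum over $g$ with $\mathbb{E}_{f \sim Q}$ is immediate and that $r$ depends on the policy only through the realized $f$.

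I expect the only genuine obstacle to be the first step: making the semantics of a randomized performative policy precise enough that the mixture identity $\cD(Q, g) = \sum_{f} Q(f)\,\cD(f, g)$ is actually justified (agents respond to the realized policy, and the realization is drawn independently of $G$). Once that modeling convention is fixed, everything else is the tower property plus linearity of expectation, with no analytic content; I would therefore spend the bulk of the write-up stating that convention carefully and keep the two computations above terse.
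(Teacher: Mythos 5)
Your proposal is correct and follows essentially the same route as the paper's own proof: condition on the realized policy $\{Q = f\}$ via the law of total (conditional) expectation, use $\Pr(Q = f \mid G = g_j) = Q(f)$ from the independence hypothesis, and identify the inner conditional expectation with $\mu_{ij}(f)$ (resp.\ $\text{EPR}(f)$). Your extra care in spelling out the semantics of a randomized performative policy and the mixture identity $\cD(Q,g) = \sum_{f} Q(f)\,\cD(f,g)$ is a welcome clarification of a step the paper leaves implicit, but it does not change the argument.
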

The implication of proposition \ref{prop: linearity} is that both the \emph{ex-post} risk and fairness constraints are linear in $Q$; this in turn will give us strong duality.
In terms of $Q$ the policymaker's primal problem is
\begin{equation}\textstyle
\label{eq: ex post random lagrangian}
\cL(Q; \lambda) = \min_{Q \in \Delta} \max_{\lambda \geq 0} \text{EPR}(Q) + \lambda^T(M\mu(Q) - c).
\end{equation}
Because this problem is linear in $Q$ and $\lambda$, the domains of $Q$ and $\lambda$ are convex, and the equality of treatments constraint is feasible (theorem \ref{thm: ex-ante discrimination}) the solution to \ref{eq: ex post random lagrangian} will be the unique saddle point $(Q^*, \lambda^*)$, which algorithm \ref{alg: ex-post-reduction} (appropriately modified to include randomization) will converge to. Specifically, if $\{f_t\}_{t=1}^T$ and ${\lambda_t}_{t=1}^T$ are the iterates of algorithm \ref{alg: ex-post-reduction}, then the empirical measure $Q_T = \frac{1}{T} \sum_{t=1}^T f_t$ and the mean $\bar{\lambda}_T = \frac{1}{T}\sum_{t=1}^T\lambda_t$ will eventually converge to an appropriate saddle point.
\begin{proposition}[\citet{agarwal2018Reductions}Theorem 1]
Let $Q_T = \frac{1}{T} \sum_{t=1}^T f_t$, $\bar{\lambda}_T = \frac{1}{T}\sum_{t=1}^T\lambda_t$ be the empirical distribution (resp. average) of the primal (resp. dual) iterates of algorithm \ref{alg: ex-post-reduction}. Let $\rho=\sup_f ||M\mu(f)-c||_{\infty}$, $K$ be the total number of moment constraints, and $\eta_t = \sqrt{log(K)+1}/\rho \sqrt{t}$. Then $(\bar{\lambda}_T, Q_T)$ is an $\epsilon_T$ saddle point with
$\epsilon_T = 2\rho B\sqrt{(log(K)+1)/T}$
\end{proposition}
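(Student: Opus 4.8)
The plan is to read this as the classical ``no-regret dynamics converge to a minimax solution'' argument (Freund--Schapire) applied to the bilinear saddle problem \eqref{eq: ex post random lagrangian}, with the dual player running exponentiated gradient and the primal player playing an exact best response. The starting point is Proposition \ref{prop: linearity}: since $\text{EPR}(Q)$ and $\mu(Q)$ are linear in $Q\in\Delta(\cF)$, the Lagrangian $\cL(Q,\lambda)=\text{EPR}(Q)+\lambda^\top(M\mu(Q)-c)$ is linear in $Q$ and affine in $\lambda$. In particular, for each fixed $\lambda$ the minimum of $\cL(\cdot,\lambda)$ over $\Delta(\cF)$ is attained at a vertex, so Step 3 of Algorithm \ref{alg: ex-post-reduction}, which returns $f_t\in\argmin_{f\in\cF}\cL(f,\lambda_t)$, actually computes $\min_{Q\in\Delta(\cF)}\cL(Q,\lambda_t)$. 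Hence the primal iterates have nonpositive regret: $\sum_{t\le T}\cL(f_t,\lambda_t)\le\sum_{t\le T}\cL(Q,\lambda_t)$ for every $Q\in\Delta(\cF)$.

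Next I would analyze the dual iterates. The update $v_{t+1}\gets v_t+\eta_t(M\hat\mu(f_t)-\hat c)$ followed by the renormalization $\lambda_t = B\,e^{v_t}/(1+\sum_k e^{v_{k,t}})$ is exactly the Hedge / exponentiated-gradient update over the scaled simplex $B\,\Delta_{K+1}$, where the extra ``$1+$'' in the denominator is a slack coordinate embedding $\{\lambda\ge 0:\|\lambda\|_1\le B\}$ into a $(K{+}1)$-dimensional simplex (this is what produces $\log(K)+1$ rather than $\log K$). The per-step gains are linear, $\lambda\mapsto\langle\lambda,\,M\hat\mu(f_t)-\hat c\rangle$, with $\ell_\infty$-norm bounded by $\rho=\sup_f\|M\mu(f)-c\|_\infty$; the standard time-varying-step-size mirror-ascent bound with $\eta_t=\sqrt{\log(K)+1}/(\rho\sqrt t)$ then yields, for every feasible $\lambda$,
\[
\sum_{t=1}^T\langle\lambda-\lambda_t,\,M\hat\mu(f_t)-\hat c\rangle \;\le\; 2\rho B\sqrt{T\,(\log(K)+1)}.
\]

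Finally I would combine the two regret bounds. Using linearity of $\cL$ in each argument and writing $Q_T=\tfrac1T\sum_t f_t$, $\bar\lambda_T=\tfrac1T\sum_t\lambda_t$, for every $Q\in\Delta(\cF)$ and feasible $\lambda$,
\[
\cL(Q_T,\lambda)-\cL(Q,\bar\lambda_T) \;=\; \tfrac1T\sum_{t=1}^T\big[\cL(f_t,\lambda)-\cL(f_t,\lambda_t)\big] \;+\; \tfrac1T\sum_{t=1}^T\big[\cL(f_t,\lambda_t)-\cL(Q,\lambda_t)\big].
\]
The first sum equals $\tfrac1T\sum_t\langle\lambda-\lambda_t,\,M\hat\mu(f_t)-\hat c\rangle\le 2\rho B\sqrt{(\log(K)+1)/T}=:\epsilon_T$ by the dual regret bound, and the second is $\le 0$ by the primal best-response property. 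Specializing to $Q=Q_T$ and maximizing over $\lambda$, and then to $\lambda=\bar\lambda_T$ and minimizing over $Q$, gives $\max_\lambda\cL(Q_T,\lambda)-\cL(Q_T,\bar\lambda_T)\le\epsilon_T$ and $\cL(Q_T,\bar\lambda_T)-\min_Q\cL(Q,\bar\lambda_T)\le\epsilon_T$, which is precisely the definition of an $\epsilon_T$-saddle point.

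I expect no conceptual obstacle: the only paper-specific ingredient is the bilinearity supplied by Proposition \ref{prop: linearity}, and the rest is bookkeeping --- matching the geometry of the dual feasible set $\{\lambda\ge 0:\|\lambda\|_1\le B\}$ to the scaled $(K{+}1)$-simplex (which accounts for both the factor $B$ and the $\log(K)+1$) and pinning down the constant $2$ in the exponentiated-gradient regret bound. Since the statement is a restatement of \citet{agarwal2018Reductions}, an alternative is simply to verify that the hypotheses of their Theorem~1 hold here: linearity of the objective and constraints in $Q$, boundedness of $M\mu(f)-c$, and the prescribed step-size schedule.
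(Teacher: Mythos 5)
Your proposal is correct, and it is essentially the argument behind the cited result: the paper itself gives no proof of this proposition, importing it directly from \citet{agarwal2018Reductions} (Theorem 1), whose proof is exactly the Freund--Schapire no-regret scheme you reconstruct --- exact primal best response giving nonpositive primal regret, exponentiated gradient over the $B$-scaled $(K{+}1)$-simplex (slack coordinate included) giving dual regret $2\rho B\sqrt{T(\log(K)+1)}$, and bilinearity (supplied here by Proposition \ref{prop: linearity}) to convert the averaged regrets into the $\epsilon_T$-saddle-point guarantee. The only cosmetic discrepancies are that the cited bound is naturally stated with $\log(K+1)$ rather than $\log(K)+1$, and that $\rho$ should be read with respect to the empirical quantities $\hat\mu,\hat c$ actually used by the algorithm; neither affects the substance of your argument.
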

Besides loss of precision due to optimization error, questions on the statistical error of policies produced by algorithm \ref{alg: ex-post-reduction} remain. Unlike the case of optimization error, the statistical error analysis is not identical to the analysis in \citep{agarwal2018Reductions}. This is due to the presence of performativity, which can affect the uniform convergence of any estimator. For concreteness, we consider the causal strategic classification example, recovering the classic parametric rate.
\noindent\begin{assumption}
\phantom{text}
\noindent\begin{enumerate}
        \item The learning setting is example \ref{ex: causal strategic classification} with $M{\mu}(\theta)-{c}$ of the form  in example \ref{ex: moment-binary-class}, and $\widehat{\text{EPR}}(\theta)$, $\hat{\mu}(\theta)$ of the form in example \ref{ex: estimators}.
        \item The parameter space $\Theta \subset \mathbb{R}^d$ is compact and $||X||_{\infty}$ is bounded above with probability one. Furthermore, the response $Ma(\theta)$ is bounded.
    \end{enumerate}
\end{assumption}
\begin{theorem}
\label{thm: error analysis}
    Let $n_A$, $n_D$ denote the number of samples observed by the policymaker from each group. Suppose $(\hat{Q}, \hat{\lambda})$ is an $\epsilon$-saddle point of \ref{eq: sample dual} with $\nu = \cO(\text{min}(n_A, n_D)^{-1/2})$ and $c = 0_6$. Let $Q^*$ minimize $\text{EPR}(Q)$ subject to $M\mu(Q)\leq c$. Then with probability at least $1-7\delta$ the distribution $\hat{Q}$ satisfies
    \[\text{EPR}(\hat{Q}) \leq \text{EPR}({Q^*}) + 2\epsilon + \tilde{\cO}(\text{min}(n_A, n_D)^{-1/2})\]
    \[||M{\mu}(\hat{Q})||_{\infty} \leq \frac{1+2\epsilon}{B} + \tilde{\cO}(\text{min}(n_A, n_D)^{-1/2})\]
    where $\tilde{\cO}$ hides square root dependence on $ln(1/\delta)$. 
\end{theorem}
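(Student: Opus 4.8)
The plan is to adapt the two-stage analysis of \citet{agarwal2018Reductions}: first a uniform-in-$\theta$ concentration bound for the plug-in estimators $\widehat{\text{EPR}},\widehat\mu$ of Example~\ref{ex: estimators} around their population counterparts, and then an otherwise deterministic saddle-point argument. The delicate point, and the main obstacle, is the first stage, because performativity could in principle make the random quantity being averaged depend on the sample in a complicated way. What saves us is that in the model of Example~\ref{ex: causal strategic classification} the best response $a(\theta_g,g)=\argmax_{a'}(M^\top\theta_g)^\top a'-C_g(a')$ does \emph{not} depend on the individual $x_i$ (the term $\theta_g^\top x$ is constant in the action), so the shift $Ma(\theta_g,g)$ applied to the ex-ante features is a common, sample-independent function of $(\theta_g,g)$. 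Hence every quantity in Example~\ref{ex: estimators} is an ordinary empirical mean $\frac1{n_g}\sum_i h(\theta,x_i)$ of a $[0,1]$-valued function $h(\theta,\cdot)$ with $\theta$ ranging over the compact $\Theta\subset\reals^d$; moreover $\widehat{\text{EPR}}$ is a fixed linear combination of the three moment functions $x\mapsto\sigma(\beta^\top(x+Ma(\theta,g)))$, $x\mapsto\sigma(\theta_g^\top(x+Ma(\theta,g)))$, and their product.

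\emph{Step 1 (uniform concentration).} Fix $g$. Each $h(\theta,\cdot)$ is bounded by $1$, and under the stated assumptions ($\|X\|_\infty$ bounded a.s., $Ma(\theta)$ bounded, $\Theta$ compact, and $\theta\mapsto a(\theta,g)$ Lipschitz on $\Theta$ — affine for the quadratic cost of Assumption~\ref{ass: labormarket-simplify}) the map $\theta\mapsto h(\theta,x)$ is Lipschitz in $\theta$ uniformly over $x$ in the bounded support; equivalently, for each $x$ it is a monotone transform of an at-most-quadratic function of $\theta$. Either description makes $\{h(\theta,\cdot):\theta\in\Theta\}$ a bounded class with metric entropy $\cO(d\log(1/t))$, so a covering-plus-Hoeffding (or VC-subgraph / Rademacher) bound gives, with probability at least $1-\delta$,
\[
\sup_{\theta\in\Theta}\Big|\tfrac1{n_g}\sum_{i=1}^{n_g}h(\theta,x_i)-\Ex_{X\sim P_g}[h(\theta,X)]\Big|\;\le\;\tilde\cO\!\Big(\sqrt{\tfrac{d+\ln(1/\delta)}{n_g}}\Big).
\]
A union bound over the six components of $\widehat\mu$ and over $\widehat{\text{EPR}}$ produces the stated $1-7\delta$ event, on which, with $\nu_0:=\tilde\cO(\min(n_A,n_D)^{-1/2})$,
\[
\sup_{\theta\in\Theta}\big|\widehat{\text{EPR}}(\theta)-\text{EPR}(\theta)\big|\le\nu_0,\qquad \sup_{\theta\in\Theta}\big\|\widehat\mu(\theta)-\mu(\theta)\big\|_\infty\le\nu_0 .
\]

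\emph{Step 2 (deterministic reduction).} Condition on this event. By Proposition~\ref{prop: linearity}, $Q\mapsto\text{EPR}(Q),\mu(Q)$ and their empirical versions are linear on $\Delta(\cF)$, so the Step-1 bounds extend to all $Q\in\Delta(\cF)$, the constraint version acquiring the factor $\|M\|_{\infty\to\infty}=2$; take $\hat c=\nu\mathbf1_6$ (recall $c=0_6$) with $\nu\ge2\nu_0$, still $\nu=\cO(\min(n_A,n_D)^{-1/2})$. Then $Q^\ast$ is feasible for the \emph{empirical} relaxed problem, $M\widehat\mu(Q^\ast)\le M\mu(Q^\ast)+\nu\mathbf1_6\le\hat c$. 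With $\widehat\cL(Q,\lambda)=\widehat{\text{EPR}}(Q)+\lambda^\top(M\widehat\mu(Q)-\hat c)$ and $\Lambda_B=\{\lambda\ge0:\|\lambda\|_1\le B\}$, so that $\max_{\lambda\in\Lambda_B}\widehat\cL(Q,\lambda)=\widehat{\text{EPR}}(Q)+B(\max_k(M\widehat\mu(Q)-\hat c)_k)_+$, a Freund--Schapire-type manipulation of the $\epsilon$-saddle conditions for $(\hat Q,\hat\lambda)$ yields $\max_{\lambda\in\Lambda_B}\widehat\cL(\hat Q,\lambda)\le\min_Q\max_{\lambda\in\Lambda_B}\widehat\cL(Q,\lambda)+2\epsilon\le\widehat{\text{EPR}}(Q^\ast)+2\epsilon\le\text{EPR}(Q^\ast)+\nu_0+2\epsilon$, the middle step using feasibility of $Q^\ast$. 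The left side equals $\widehat{\text{EPR}}(\hat Q)+B(\max_k(M\widehat\mu(\hat Q)-\hat c)_k)_+$, and $\widehat{\text{EPR}}$ takes values in $[0,1]$ with $\text{EPR}(Q^\ast)\le1$, so we read off $\widehat{\text{EPR}}(\hat Q)\le\text{EPR}(Q^\ast)+\nu_0+2\epsilon$ and $B(\max_k(M\widehat\mu(\hat Q)-\hat c)_k)_+\le1+\nu_0+2\epsilon$. Translating back through $|\widehat{\text{EPR}}-\text{EPR}|\le\nu_0$, $\|M\widehat\mu-M\mu\|_\infty\le2\nu_0$, $\|\hat c\|_\infty=\nu$, and the fact that the constraint matrix carries each of the three response/treatment differences with both signs (so $\|M\mu(\hat Q)\|_\infty=\max_k(M\mu(\hat Q))_k$) gives precisely the two displayed inequalities.

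I expect Step 1 to be the real obstacle: one must certify that, despite performativity, the plug-in estimators are empirical means over a class of entropy $\cO(d\log(1/t))$, which is what makes the parametric rate and the hidden $\sqrt d,\sqrt{\ln(1/\delta)}$ factors go through. This rests on the sample-independence of $a(\theta,g)$ in this model and on the compactness/boundedness assumptions; for a generic performative map, where the response depends on the individual, this step would need substantially more care. Step 2 is essentially the statistical-error argument of \citet{agarwal2018Reductions}, with strong duality inherited from linearity (Proposition~\ref{prop: linearity}) and feasibility of the equality-of-treatment constraints (Theorem~\ref{thm: ex-ante discrimination}).
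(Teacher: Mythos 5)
Your proposal is correct and follows essentially the same route as the paper: a uniform concentration bound for the plug-in estimators (hinging, as you correctly identify, on the fact that the best response $a(\theta,g)$ is sample-independent so the estimators are genuine empirical means over a $\theta$-indexed class), followed by the deterministic saddle-point lemmas of \citet{agarwal2018Reductions} and a union bound over the seven events. The only difference is in the choice of uniform-convergence tool: the paper bounds the deviations via Rademacher complexity using Talagrand's contraction and Maurer's vector-contraction inequality for the product moment, whereas you use a covering-number/metric-entropy argument, but both yield the same $\tilde{\cO}(\min(n_A,n_D)^{-1/2})$ rate.
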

In practice, randomization is often an unnecessary complication, and for simplicity, experiments are performed utilizing the non-randomized version of algorithm \ref{alg: ex-post-reduction}.
\section{Coate and Loury Result}
In this section we cover a similar impossibility theorem to \ref{thm: informal} for the Coate and Loury model.
\begin{example}[Coate-Loury labor market model \citep{coate1993Will}]
\label{ex: coate-loury}
Consider an employer that wishes to hire skilled workers which reside in one of two identifiable groups $G \in \{A , D\}; G \sim \text{Ber}[\lambda]$. The workers are represented as $(X, Y, G)$ tuples. Here $Y|G=g \in \{0,1\}$ drawn from a $\text{Bernoulli}(\pi_g)$ distribution represents the qualification/skill of a worker, and $X \in [0,1]$ some noisy signal (possible the outcome of an skill assessment) drawn from  CDF $\Phi(X | Y). $ We will assume that $\Phi(X|Y =1)$ stochastically dominates $\Phi(X|Y=0)$, and additionally that $X \ind G | Y$. As such the hiring firm opts to deploy hiring policy $f(x, \theta, g) = 1_{x \geq \theta_g}$. The performative aspect of the model is that post deployment of any hiring policy $\theta_g$, the workers select their qualification level in response to the employer's policy. If $w > 0$ is the wage paid to a worker and $C_g$ is the (random and drawn from CDF $E_g$) cost of attaining qualification, then for a worker from group $g$ with observed cost $c_g$ the utility of each selecting each option of skill $y$ is
\[\textstyle
u_w(\theta_g,y,c_g)\triangleq\begin{cases}\textstyle\int_{[\theta_g,1]} w d\Phi(x\mid 1) - c_g & \text{worker selects $y=1$},\\\textstyle\int_{[\theta_g,1]} w d\Phi(x\mid 0) & \text{if the worker remains unskilled},\end{cases}
\]
Each worker acts rationally and selects the $y$ that maximizes their utility, at the sample level the performative map for a worker with sensitive trait $g$ is {
\[
\begin{aligned}
Y'\gets\argmax_{y'\in\{0,1\}}u_w(f,Y,y'), \\
X'\mid Y'\sim \Phi(x\mid Y').
\end{aligned}
\]
In aggregate, the proportion of qualified workers (in a given group) is updated via 
\[\pi_g(\theta_g) = E_g(w(P(X>\theta_g| Y=1) - P(X>\theta_g|Y=0))).\]
The workers' do not respond instantly to the employer's hiring policy; it takes them a while. Thus we interpret $\cD(f)$ as the \emph{long term} distribution of the workers' skill levels and assessments in response to the employer's hiring policy. More concretely, imagine a labor market in which the workers slowly turn over: new workers enter the workforce and old workers retire constantly. As workers enter the workforce, they make their human capital investment decisions in response to the employer's (contemporaneous) hiring policy. Over a long period, the labor force will converge to $\cD(f)$. To account for the long-term effects of their hiring policy on the labor market, the employer solves the \emph{performative} policy learning problem:}
\[\textstyle R(\theta) = \sum_{g \in G} \lambda_g [ p_+\mathbb{P}(X>\theta_g \mid y=1) \pi_g(\theta_g) - p_-\mathbb{P}(X>\theta_g \mid y=0) (1-\pi_g(\theta_g))],\]
where $p_+$ and $p_-$ are the firm's utilities for hiring a qualified and unqualified worker respectively.
\end{example}
\begin{proposition}
Consider the discrete labor market example \ref{ex: coate-loury}, recall the mechanism of discrimination:
\begin{enumerate}
    \item Wages are independent of group membership.
    \item There is no differential item functioning (DIF). \cite{penfield2006Differential} in the skill assessment ($X\ind G\mid Y$)
    \item Cost of education is discriminatory, \ie{} $E_A(c) \not \deq E_D(c)$.
\end{enumerate}
Thus, the \emph{only} source of discrimination is the cost of education. Suppose that this discrimination is of the following form: the cost random variables $C_A$, $C_D$ are unbounded and $C_D$ stochastically dominates $C_A$, so that group $D$ is strictly disadvantaged through the cost of education. Then, for any hiring policy $\theta = (\theta_A, \theta_D)$ that satisfies equality of treatments (and thus equality of outcomes), it holds that $\pi_A(\theta_A) = \pi_D(\theta_D) = 0$.
\end{proposition}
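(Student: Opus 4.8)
The plan is to translate equality of treatment into constraints on the two hiring thresholds and on the induced qualification proportions, and then use the \emph{strict} cost-of-education disparity together with unboundedness of the cost distributions to force the degenerate outcome. First I would record the relevant ex-post quantities. Because $X'\mid Y'$ has the same law as $X\mid Y$ and $X\ind G\mid Y$ (no differential item functioning) while the wage $w$ is common, the ex-post confusion-matrix rates are group-blind given the thresholds: $m_{\text{FPR}}(f,g)=\Pr\{X'\ge\theta_g\mid Y'=0\}=\bar\Phi(\theta_g\mid 0)$ and $m_{\text{FNR}}(f,g)=\Pr\{X'<\theta_g\mid Y'=1\}=\Phi(\theta_g\mid 1)$, with $\bar\Phi(\cdot\mid y)=1-\Phi(\cdot\mid y)$. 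Equality of treatment forces $m_{\text{FPR}}$ and $m_{\text{FNR}}$ to agree across $g$, hence $\bar\Phi(\theta_A\mid 0)=\bar\Phi(\theta_D\mid 0)$ and $\bar\Phi(\theta_A\mid 1)=\bar\Phi(\theta_D\mid 1)$. Consequently the workers' investment incentive $\Delta(\theta_g):=w\big(\bar\Phi(\theta_g\mid 1)-\bar\Phi(\theta_g\mid 0)\big)$ — which is nonnegative since $X\mid Y=1$ stochastically dominates $X\mid Y=0$, and which is exactly the threshold in $\pi_g(\theta_g)=E_g(\Delta(\theta_g))$ — takes a common value $\Delta^\star:=\Delta(\theta_A)=\Delta(\theta_D)$.

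Next I would pin down the qualification proportions. The ex-post hiring rate is $m_{\text{par}}(f,g)=\pi_g\,\bar\Phi(\theta_g\mid 1)+(1-\pi_g)\,\bar\Phi(\theta_g\mid 0)$, and since the $\bar\Phi$ factors are now group-independent, demographic parity yields $(\pi_A-\pi_D)\,\Delta^\star/w=0$ (one may equivalently use equality of responses, which directly asserts $\pi_A(\theta_A)=\mathbb{E}_{\cD(f,A)}[Y']=\mathbb{E}_{\cD(f,D)}[Y']=\pi_D(\theta_D)$). If $\Delta^\star=0$ no worker has an incentive to invest, so $\pi_g=E_g(0)=0$ because the cost of qualification is positive, and we are done; so assume $\Delta^\star>0$, whence $E_A(\Delta^\star)=\pi_A(\theta_A)=\pi_D(\theta_D)=E_D(\Delta^\star)$.

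Finally I would invoke the cost disparity. Stochastic dominance of $C_D$ over $C_A$ gives $E_D(x)\le E_A(x)$ for all $x$, and the hypothesis that $D$ is \emph{strictly} disadvantaged upgrades this to $E_D(x)<E_A(x)$ whenever $E_A(x)>0$; since $C_A$ is unbounded above, $E_A(x)<1$ for every finite $x$. Therefore the equality $E_A(\Delta^\star)=E_D(\Delta^\star)$ can hold only at a point where $E_A(\Delta^\star)=0$, giving $\pi_A(\theta_A)=\pi_D(\theta_D)=0$. The main obstacle is exactly this last step: one must fix the precise meaning of ``strictly disadvantaged'' (strict stochastic dominance on the region where $E_A$ is nonzero) and use unboundedness to exclude the other boundary value $E_A(\Delta^\star)=1$; everything before it is routine bookkeeping, relying only on the no-DIF and equal-wage assumptions to make the confusion-matrix rates, the hiring rate, and the incentive function $\Delta(\cdot)$ group-blind. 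This also recovers, a posteriori, consistency with Proposition \ref{prop:joint-dist-ind}: the only policy in this model achieving joint independence of $(Y',f(X',G))$ and $G$ is the trivial one that induces no qualification at all.
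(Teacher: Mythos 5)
Your proposal is correct and follows essentially the same route as the paper: equality of treatment (via the no-DIF assumption) forces the group-wise TPR/FPR — and hence the investment incentive $w(\mathrm{TPR}-\mathrm{FPR})$ — to coincide, and then equality of responses combined with strict stochastic dominance of $C_D$ over $C_A$ forces $E_A(\Delta^\star)=E_D(\Delta^\star)$ to hold only at zero. You are somewhat more careful than the paper's terse argument in explicitly disposing of the edge cases $\Delta^\star=0$ and $E_A(\Delta^\star)=1$ and in pinning down the meaning of ``strictly disadvantaged,'' but the substance is identical.
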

\begin{proof}
Let $\text{TPR}$, $\text{FPR}$ denote the true positive and false positive rate of a classifier. Note that $\pi_g(\theta_g) = G_g(w(\text{TPR}(\theta_g)- \text{FPR}(\theta_g)))$. Note that ex-post separation would require that $\text{TPR}(\theta_A) = \text{TPR}(\theta_D)$ and $\text{FPR}(\theta_A) = \text{FPR}(\theta_D)$. However by the assumption that $c>0 \implies G_A(c) > G_D(c)$, any policy $(\theta_A, \theta_D)$ that satisfies ex post separation and $w(\text{TPR}(\theta_A)- \text{FPR}(\theta_A))> 0$ satisfies $G_A(w(\text{TPR}(\theta_A)- \text{FPR}(\theta_A))) > G_D(w(\text{TPR}(\theta_D)- \text{FPR}(\theta_D)))$ thus any policy that satisfies ex post equality must satisfy $\pi(\theta_A)=\pi(\theta_D) = 0$.
\end{proof}
\section{Section 2 and Section 3 Proofs}
\subsection{section 2 proofs}
\begin{proposition}
    A policy $f$ satisfies equality of treatment and equality of responses if and only if the joint distribution $(Y', f(X', G))$ is independent of $G$.
\end{proposition}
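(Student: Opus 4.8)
The plan is to reduce everything to the binary setting adopted in Section~\ref{sec:equality-equity-compatible}, where the conditional law of the pair $(Y',f(X',G))$ given $G=g$ is a point of a three-dimensional simplex and is therefore pinned down by
\[
p_{ab}(g)\triangleq\Pr_{\cD(f,g)}\{Y'=a,\ f(X',g)=b\},\qquad (a,b)\in\{0,1\}^2,
\]
subject to $p_{00}(g)=1-p_{01}(g)-p_{10}(g)-p_{11}(g)$. By definition, $(Y',f(X',G))\ind G$ means exactly that $g\mapsto(p_{01}(g),p_{10}(g),p_{11}(g))$ is constant. The first step is to record the invertible change of coordinates $r(g)=\Ex_{\cD(f,g)}[Y']=p_{10}(g)+p_{11}(g)$, $q(g)=\Ex_{\cD(f,g)}[f(X',g)]=p_{01}(g)+p_{11}(g)$, $t(g)=\Ex_{\cD(f,g)}[f(X',g)Y']=p_{11}(g)$, whose inverse is $p_{11}=t,\ p_{10}=r-t,\ p_{01}=q-t,\ p_{00}=1-r-q+t$. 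Consequently $(Y',f(X',G))\ind G$ holds if and only if the three scalar maps $r(\cdot),q(\cdot),t(\cdot)$ are each constant across the sensitive trait.

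For the ``if'' direction I would simply observe that independence makes every functional of the conditional law of $(Y',f(X',G))$ constant in $g$; in particular $m_{\mathrm{res}}=r$, $m_{\mathrm{par}}=q$, and the four ratios $m_{\mathrm{FPR}},m_{\mathrm{FNR}},m_{\mathrm{PPV}},m_{\mathrm{NPV}}$ (each a quotient of two such functionals) are constant, so equality of responses and all three parts of Definition~\ref{def:ex-post-group-fairness} hold. For the ``only if'' direction, assume equality of responses and equality of treatment. Equality of responses gives $r(g)\equiv r$, and part (1) of Definition~\ref{def:ex-post-group-fairness} gives $q(g)\equiv q$. When $r>0$, part (2) makes $m_{\mathrm{FNR}}(f,g)=p_{10}(g)/r$ constant, hence $p_{10}(g)$ is constant, and then $t(g)=r-p_{10}(g)$ is constant as well; by the coordinate change above this yields independence. (When $q>0$ one may instead extract $t(g)$ from $m_{\mathrm{PPV}}(f,g)=t(g)/q$ in part (3).)

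What needs care is only the degenerate situation in which a denominator vanishes, \ie{} $r\in\{0,1\}$ or $q\in\{0,1\}$. In each of these cases one of $Y'$ or $f(X',G)$ is almost surely constant conditional on $G=g$, so two of the $p_{ab}(g)$ are identically zero and the remaining two are determined by $r(g)\equiv r$ and $q(g)\equiv q$ alone; independence again follows. So I expect the main (and only mildly annoying) obstacle to be a tidy case split on which of $r,1-r,q,1-q$ may be zero, with everything else being bookkeeping around the linear change of variables. I would close by remarking that the forward implication used only equality of responses, long-term demographic parity, and a single separation-or-sufficiency metric, which explains why the whole equality-of-treatment package collapses to the one independence statement.
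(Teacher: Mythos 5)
Your proof is correct, but it follows a different route from the paper's. The paper argues by factorizing the conditional joint law: it writes $p(f(X'),Y'\mid G)=p(f(X')\mid Y',G)\,p(Y'\mid G)$, observes that equality of treatment (the separation part) makes the first factor group-independent and equality of responses makes the second factor group-independent, and for the converse divides the joint by the marginals to recover the conditional constraints. You instead parametrize the $2\times 2$ joint law of $(Y',f(X',G))$ given $G=g$ by the three moments $r(g)=\Ex[Y']$, $q(g)=\Ex[f(X',g)]$, $t(g)=\Ex[f(X',g)Y']$, note the linear change of coordinates to the cell probabilities is invertible, and check that equality of responses, long-term demographic parity, and one of FNR/PPV pin down all three. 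Your route buys two things: it is exactly the moment characterization the paper itself uses in Example~\ref{ex: moment-binary-class} to operationalize the constraint, so it makes the link between the proposition and the reduction algorithm explicit; and it handles the degenerate cases $r\in\{0,1\}$ or $q\in\{0,1\}$ head-on, whereas the paper's division by $p(Y'\mid G)$ silently assumes these denominators are nonzero (the same caveat applies to the definitions of $m_{\mathrm{FNR}}$, $m_{\mathrm{PPV}}$, etc., so this is a genuine tidying-up, not pedantry). Your closing remark that the converse direction only needs a strict subset of the equality-of-treatment package is also a mild strengthening not stated in the paper. The trade-off is that your explicit three-coordinate bookkeeping is tied to the binary case, while the paper's factorization argument extends verbatim to multiclass $Y'$ and $f$.
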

\begin{proof}[Proof of proposition \ref{prop:joint-dist-ind}:]
\phantom{proof}
Suppose a policy $f$ satisfies fairness definitions \ref{def:ex-post-equity} and \ref{def:ex-post-group-fairness}. We have
\[\phi(f(X'), Y'|G) = p(f(X')|Y', G)p(Y'|G) \stackrel{\text{defs }\ref{def:ex-post-equity}+\ref{def:ex-post-group-fairness}}{(f;z)}{=} p(f(X')|Y')p(Y') = \phi(f(X'), Y') \]
On the other hand suppose $\phi(f(X'), Y'|G=g) = \phi(f(X'), Y')$ for $g \in |G|$. Trivially we must have $f(X') \ind G$ and $Y' \ind G$. For separation note that $p(f(X')|Y', G) = \phi(f(X'), Y'|G)/(p(Y'|G)) = \phi(f(X'), Y')/(p(Y')) = p(f(X')|Y')$. Sufficiency will follow from an essentially identical argument.
\end{proof}
\subsection{Proof of Theorem \ref{thm: ex-ante discrimination}}
\begin{lemma}
\label{lem: strat-resp closed form}
    Under assumption \ref{ass: labormarket-simplify} an agent in group $g$ selects action $a({\theta}_g, g) = \frac{1}{c_g}M{\theta_g}$.
\end{lemma}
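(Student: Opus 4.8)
The plan is to solve the agent's utility-maximization program directly, exploiting that under Assumption~\ref{ass: labormarket-simplify} it reduces to a strongly concave unconstrained quadratic in the action $a'$, whose maximizer is available in closed form.

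First I would substitute the quadratic cost $C(a',g) = \tfrac{c_g}{2}\|a'\|_2^2$ into the response map of Example~\ref{ex: causal strategic classification}: an agent in group $g$ with ex-ante features $x$ facing policy $\theta_g$ chooses $a' \in \reals^k$ to maximize
\[
u(a') \;=\; \theta_g^T x \;+\; \theta_g^T M a' \;-\; \tfrac{c_g}{2}\|a'\|_2^2 .
\]
The term $\theta_g^T x$ does not depend on $a'$ and can be dropped; the remainder is a concave quadratic with Hessian $-c_g I$, which is negative definite since $c_g > 0$, so its unique global maximizer is determined by the first-order condition alone.

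Next I would set the gradient to zero: $\nabla_{a'} u(a') = M^T \theta_g - c_g a' = 0$, which gives $a(\theta_g,g) = \tfrac{1}{c_g} M^T \theta_g$. Finally, since $M = \diag[B]$ with $B \in \{0,1\}^d$ is symmetric, $M^T = M$, yielding $a(\theta_g,g) = \tfrac{1}{c_g} M \theta_g$ as claimed. (The identical computation in the modified labor-market model of Example~\ref{ex: Modified Labor Market} replaces $\theta_g$ by $\Lambda^T\theta_g$, giving $a(\theta_g,g) = \tfrac{1}{c_g} M \Lambda^T \theta_g$.)

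There is no substantive obstacle here: the only points requiring care are (i) verifying strong concavity, so that the stationary point is the global maximizer rather than a saddle point or boundary point — this is where $c_g > 0$ (and, in the setting of Theorem~\ref{thm: ex-ante discrimination}, the explicit choices $c_A = c_D = 1$ or $c_A < c_D$) is used — and (ii) bookkeeping the transpose on $M$, which is immaterial because $M$ is diagonal. I would present the argument in the three short steps above.
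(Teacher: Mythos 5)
Your proposal is correct and follows essentially the same route as the paper's own proof: write down the agent's quadratic program, take the first-order condition $M^T\theta_g - c_g a = 0$, and use that $M = \diag[B]$ is symmetric to conclude $a(\theta_g,g) = \tfrac{1}{c_g}M\theta_g$. The only difference is that you explicitly verify strong concavity before invoking the stationarity condition, which the paper leaves implicit.
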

\begin{proof}
   Each agent solves \[a({\theta}_g) = \argmax_{a' \in \mathbb{R}^d} {\theta}_g^T[x_g + Ma'] - \frac{c_g}{2}||a'||_2^2.\]
   We can check the first order optimality condition to see that $a({\theta}_g)$ must solve
   \[M^T \theta_g -  c_g a(\theta_g) = 0\]
   By assumption $M$ is diagonal and thus symmetric, so $M^T = M$ and $\frac{1}{c_g} M {\theta}_g =  a({\theta}_g)$.
\end{proof}
\begin{lemma}
    Any policy $\theta = (\theta_A, \theta_D)$ that satisfies $(\theta_g^T X'_g, \beta^T X'_g) \ind {G}$ also satisfies equality of treatment and equality of outcomes. 
\end{lemma}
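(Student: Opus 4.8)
The plan is to deduce the lemma from Proposition~\ref{prop:joint-dist-ind}, which already identifies ``equality of treatment and equality of responses'' with the single statement $(Y', f(X',G)) \ind G$. So it suffices to show that the hypothesis $(\theta_g^T X'_g,\, \beta^T X'_g) \ind G$ implies $(Y', f(X',G)) \ind G$; the various equality-of-outcomes conclusions (for $m_{\mathrm{par}}, m_{\mathrm{FPR}}, m_{\mathrm{FNR}}, m_{\mathrm{PPV}}, m_{\mathrm{NPV}}$ and $m_{\mathrm{res}}$) then follow, since each of these metrics is a functional of the conditional joint law of $(Y', f(X',G))$ given $G=g$.

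The key structural fact in Example~\ref{ex: causal strategic classification} is that both the ex-post label and the ex-post prediction are obtained from $X'$ through a \emph{single} linear functional followed by an independent Bernoulli draw: $Y'\mid X' \sim \Ber[\sigma(\beta^T X')]$, so $Y'\ind G\mid X'$ and $Y'$ sees $X'$ only through $\beta^T X'$; and $f(X',g)\mid X' \sim \Ber[\sigma(\theta_g^T X')]$, so $f(X',g)$ sees $X'$ only through $\theta_g^T X'$, with the two draws conditionally independent given $X'$. Consequently there is one fixed stochastic kernel $\kappa$ from $\reals^2$ to $\{0,1\}^2$, namely $\kappa\big((u,v),\cdot\big) = \Ber[\sigma(u)]\otimes\Ber[\sigma(v)]$, \emph{not depending on $g$}, such that conditional on $G=g$ the law of $(f(X',g),Y')$ equals $\kappa$ integrated against the law of $(\theta_g^T X'_g,\,\beta^T X'_g)$. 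I would make this precise by conditioning on $X'$: $\Pr\{f(X',g)=a,\, Y'=b \mid G=g\} = \Ex\big[\,\Pr\{f(X',g)=a\mid \theta_g^T X'_g\}\,\Pr\{Y'=b\mid \beta^T X'_g\}\, \big|\, G=g\big]$, which visibly depends on $g$ only through the joint law of the two scores.

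With that in hand the conclusion is immediate: by hypothesis the law of $(\theta_g^T X'_g, \beta^T X'_g)$ is the same for $g=A$ and $g=D$, hence so is its image under $\kappa$, i.e.\ the conditional law of $(Y', f(X',G))$ given $G$ does not depend on $G$, which is exactly $(Y',f(X',G))\ind G$; Proposition~\ref{prop:joint-dist-ind} then gives equality of treatment and equality of responses. I do not anticipate a genuine obstacle; the only point needing care is the bookkeeping in the previous paragraph---verifying that $X'$ enters the joint conditional law of $(f(X',g),Y')$ only through $(\theta_g^T X', \beta^T X')$ and that the two Bernoulli draws are conditionally independent given $X'$---so that this joint conditional law really does factor as a $g$-free kernel applied to the law of the two linear scores.
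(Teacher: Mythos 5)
Your proposal is correct and follows essentially the same route as the paper's proof: condition on $X'$, use the tower property, and observe that the conditional law of $(f(X',g),Y')$ given $G=g$ depends on $g$ only through the law of the two scores $(\theta_g^T X'_g,\beta^T X'_g)$, which the hypothesis equates across groups. The only difference is presentational --- the paper verifies the three moments $\Ex[Y'\mid G]$, $\Ex[f(X',G)\mid G]$, $\Ex[f(X',G)Y'\mid G]$ one at a time (which, per Example~\ref{ex: moment-binary-class}, characterizes the joint law for binary variables), whereas you package the same computation as a $g$-free kernel applied to the score distribution and invoke Proposition~\ref{prop:joint-dist-ind}; your version is slightly more explicit about the conditional independence of the two Bernoulli draws given $X'$, which is the point needed for the cross-moment.
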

\begin{proof}
     This can be seen by applying, the tower property of conditional expectation, conditioning on $X$, then use the assumption that $(\theta_g^T X'_g, \beta^T X'_g) \ind {G}.$ For example, to see that equality of responses holds note that we have:
    $$\mathbb{E}[Y'|G=A] = \mathbb{E}_X' \mathbb{E}[Y'|G=A, X'] = \mathbb{E}_X'[\sigma(\beta^TX')|G=A] $$
    $$= \mathbb{E}_X'[\sigma(\beta^T X')|G=D] = \mathbb{E}[Y'|G=D]$$
    Equality of treatment will follow from the exact same argument applied to the quantities $\hat{Y}'$ and $Y' \hat{Y}'$ (see also example \ref{ex: moment-binary-class}).
     
\end{proof}
We now provide the explicit structure of the stratified manifolds in theorem \ref{thm: ex-ante discrimination}, and corresponding proofs. For notational convenience, let $\text{Aff}(k)$ (resp. $\mathbb{S}(k)$) be the set of k-dimensional affine subspaces (resp. k-dimensional hyperspheres) of a context-dependent ambient space, and $\mathbb{O}(k)$ the group of $k \times k$ orthogonal matrices.
\begin{theorem}[Theorem \ref{thm: ex-ante discrimination} ex-ante skill discrimination]
   Consider the learning setting of Example \ref{ex: causal strategic classification}. Suppose $c_A = c_D = 1$, the vectors $\{(\mu_{A,m}, -\mu_{D,m}), (\beta_m, -\beta_m)\}$ are not co-linear and the vectors $\{(\mu_{A,u}, -\mu_{D,u}), (\beta_m, -\beta_m)\}$ are not co-linear. Then there exist sets $\mathcal{Z}_m \subset R^{2d_m}$ and $\cZ_u \in R^{2d_u}$ of the form:
   \[\mathcal{Z}_m = \cup_{U \in \mathcal{U}_m} {Z}_m(U); \: Z_m(U) \in \text{Aff}(d_m-2) \]
\[\mathcal{U}_m = \{U \in \mathbb{O}(d_m):
        \begin{pmatrix}
\beta_m^T ,-\beta_m^T\\
\mu_{A,m}^T, -\mu_{D,m}^T
\end{pmatrix} \not\perp \text{Null}[I_{d_m},-U^T] \}\]
\[\mathcal{Z}_u = \cup_{U \in \mathcal{U}_u} {Z}_u(U);  Z_u(U) \in \text{Aff}(d_u-2)\]
\[\mathcal{U}_u = \{U \in \mathbb{O}(d_u):
        \begin{pmatrix}
\beta_u^T ,-\beta_u^T\\
-\mu_{A,u}^T, \mu_{D,u}^T
\end{pmatrix} \not\perp \text{Null}[I_{d_u},-U^T] \}\]
such that for any learner decision $\theta = (\theta_A, \theta_D)$ which satisfies $(\theta_{A,m}, \theta_{D,m}) \in \cZ_m$ and $(\theta_{A,u}, \theta_{D,u}) \in \cZ_u$ also satisfies equality of treatment.
\end{theorem}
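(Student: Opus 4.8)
The plan is to reduce the claim to matching a single bivariate Gaussian law across the two groups and then to decouple the resulting moment equations into the manipulable and nonmanipulable coordinate blocks. By Lemma~\ref{lem: strat-resp closed form} and Assumption~\ref{ass: labormarket-simplify}, with $c_A=c_D=1$ the ex-post feature vector for group $g$ is $X'_g = X_g + M\theta_g \sim \mathcal{N}(\mu_g + M\theta_g,\, I_d)$. Writing each $v\in\reals^d$ as $(v_m,v_u)$ in the split induced by $M=\diag[B]$, $B\in\{0,1\}^d$, and using that $X_g$ has identity covariance, the blocks $X'_{g,m}\sim\mathcal{N}(\mu_{g,m}+\theta_{g,m},I_{d_m})$ and $X'_{g,u}=X_{g,u}\sim\mathcal{N}(\mu_{g,u},I_{d_u})$ are independent. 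By the lemma asserting that any $\theta$ with $(\theta_g^\top X'_g,\ \beta^\top X'_g)\ind G$ satisfies equality of treatment (and, via Proposition~\ref{prop:joint-dist-ind}, equality of responses), it is enough to choose $\theta=(\theta_A,\theta_D)$ making the law of the bivariate Gaussian $(\theta_g^\top X'_g,\ \beta^\top X'_g)$ the same for $g=A$ and $g=D$. Matching means and covariances — the variance of $\beta^\top X'_g$ being the group-independent $\|\beta\|^2$ — and using $\theta_g^\top M\theta_g=\|\theta_{g,m}\|^2$ and $\beta^\top M\theta_g=\beta_m^\top\theta_{g,m}$, this reduces (with $m_g:=\mu_g+M\theta_g$) to the four scalar identities
\[
\text{(M1)}\ \|\theta_A\|^2=\|\theta_D\|^2,\quad \text{(M2)}\ \theta_A^\top\beta=\theta_D^\top\beta,\quad \text{(M3)}\ \theta_A^\top m_A=\theta_D^\top m_D,\quad \text{(M4)}\ \beta^\top m_A=\beta^\top m_D.
\]

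Next I would split each of (M1)--(M4) into its $m$-block and $u$-block contributions and impose the following decoupled sufficient constraints. On $(\theta_{A,m},\theta_{D,m})$: $\|\theta_{A,m}\|=\|\theta_{D,m}\|$; $\theta_{A,m}^\top\mu_{A,m}=\theta_{D,m}^\top\mu_{D,m}$; and $\beta_m^\top(\theta_{A,m}-\theta_{D,m})=-(\mu_A-\mu_D)^\top\beta$. On $(\theta_{A,u},\theta_{D,u})$: $\|\theta_{A,u}\|=\|\theta_{D,u}\|$; $\theta_{A,u}^\top\mu_{A,u}=\theta_{D,u}^\top\mu_{D,u}$; and $\beta_u^\top(\theta_{A,u}-\theta_{D,u})=(\mu_A-\mu_D)^\top\beta$. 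The design principle is that the skill-gap residual $(\mu_A-\mu_D)^\top\beta$ that the third $m$-constraint is forced to carry into the $m$-block contributions of (M2) and (M4) is exactly cancelled by the third $u$-constraint; a short computation then checks that these six constraints together imply (M1)--(M4). One takes $\mathcal{Z}_m$ to be the solution set of the three $m$-constraints and $\mathcal{Z}_u$ the solution set of the three $u$-constraints. (Consistency of the inhomogeneous constraints needs $\beta_m\neq0$ and $\beta_u\neq0$, which follow from the non-colinearity hypotheses, since $(\beta_m,-\beta_m)=0$ or $(\beta_u,-\beta_u)=0$ would make the relevant pair colinear.)

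It remains to recognize the structure of $\mathcal{Z}_m$ (the argument for $\mathcal{Z}_u$ is identical). The norm equality holds iff $\theta_{D,m}=U\theta_{A,m}$ for some $U\in\mathbb{O}(d_m)$, i.e. iff $(\theta_{A,m},\theta_{D,m})\in\text{Null}[I_{d_m},-U^\top]$. Substituting $\theta_{D,m}=U\theta_{A,m}$ turns the remaining two $m$-constraints into two affine equations in $\theta_{A,m}\in\reals^{d_m}$ with normals $\mu_{A,m}-U^\top\mu_{D,m}$ and $\beta_m-U^\top\beta_m$; these are exactly the restrictions to $\text{Null}[I_{d_m},-U^\top]$ of the rows $(\beta_m^\top,-\beta_m^\top)$ and $(\mu_{A,m}^\top,-\mu_{D,m}^\top)$. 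For $U\in\mathcal{U}_m$ — the $U$ for which those restricted rows are linearly independent, which is exactly the ``$\not\perp$'' condition in the statement — the two affine equations are independent and consistent, so their solution set $Z_m(U)$ is an affine subspace of dimension $d_m-2$, and $\mathcal{Z}_m=\bigcup_{U\in\mathcal{U}_m}Z_m(U)$ has the asserted form; similarly $\mathcal{Z}_u=\bigcup_{U\in\mathcal{U}_u}Z_u(U)$ with $Z_u(U)\in\text{Aff}(d_u-2)$. Finally $\mathcal{U}_m$ and $\mathcal{U}_u$ are nonempty (indeed open and dense): by the non-colinearity of $\{(\mu_{A,m},-\mu_{D,m}),(\beta_m,-\beta_m)\}$ (resp. $\{(\mu_{A,u},-\mu_{D,u}),(\beta_u,-\beta_u)\}$) the two rows span a $2$-plane in the dual space, and a fixed $2$-plane meets the annihilator of a generic $d_m$-dimensional (resp. $d_u$-dimensional) subspace only at the origin, since $2+d_m\le 2d_m$.

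The step I expect to be the main obstacle is the block decomposition and the choice of decoupled constraints: one must see that the sole obstruction to matching the $m$-block and $u$-block bivariate summands individually is the scalar $(\mu_A-\mu_D)^\top\beta$ from the ex-ante skill gap, and route it through the off-diagonal covariance equation so that the nonmanipulable block can absorb it — hence replacing the naive $\theta_{A,m}^\top\beta_m=\theta_{D,m}^\top\beta_m$ by the inhomogeneous constraint and compensating in the $u$-block. This is also where $c_A=c_D$ is essential: if $c_A\neq c_D$, then (in the $\mu\equiv0$ case) $m_g=\tfrac1{c_g}M\theta_g$ and (M3) together with the norm-equality constraint would force $\|\theta_{A,m}\|=0$, collapsing the product structure — which is why the cost-discrimination case of Theorem~\ref{thm: ex-ante discrimination} requires a separate, non-product construction.
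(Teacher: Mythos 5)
Your proposal is correct and follows essentially the same route as the paper's proof: reduce via the auxiliary lemma to matching the bivariate Gaussian law of $(\theta_g^\top X'_g,\beta^\top X'_g)$ across groups, split the resulting moment equations into manipulable and nonmanipulable blocks with the skill-gap scalar $(\mu_A-\mu_D)^\top\beta$ routed oppositely through the two blocks, and parametrize the norm-equality constraint by an orthogonal matrix $U$ so that the remaining two constraints cut out a $(d_m-2)$- (resp. $(d_u-2)$-) dimensional affine set inside $\text{Null}[I,-U^\top]$. Your additional checks (that the six decoupled constraints imply the combined ones, that non-colinearity guarantees $\beta_m,\beta_u\neq 0$ and nonemptiness of $\mathcal{U}_m,\mathcal{U}_u$) are consistent with, and slightly more explicit than, the paper's argument.
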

\begin{proof}
We have that any policy $\theta = (\theta_A, \theta_D)$ that satisfies or $(\theta_g^T X'_g, \beta^T X'_g) \ind {G}$ also satisfies equality of treatment. By Lemma \ref{lem: strat-resp closed form} $(\theta_g^T X'_g, \beta^T X'_g)$ are generated in the manner $\beta^T X'_g = \beta^T(X_g + M{\theta_g})$ and $\theta_g^T X'_g = {\theta}_g^T (X_g +M{\theta_g})$. 
    Under the normality assumption in \ref{ass: labormarket-simplify}, the ex-post joint distribution of the (pre-discretized) responses and predictions conditioned on sensitive trait is
\[(\theta_g^T X'_g, \beta^TX'_g) \sim \mathcal{N}(\tilde{\mu}_g, \tilde{\Sigma}_g)\]
\[\tilde{\mu}_g = (\theta^{T}_g (\mu_g + M\theta_g), \beta^{T}(\mu_g+M\theta_g))\]
\[\tilde{\Sigma}_g = \begin{pmatrix}
||\theta_g||^2 & \theta^{T}_g\beta \\
\theta^{T}_g\beta  & ||\beta||^2 
\end{pmatrix}\]

As mentioned, equality of responses plus equality of treatment will be upheld by any policy that satisfies $(\theta_A^T X'_A, \beta^TX'_A) \deq (\theta_D^T X'_D, \beta^TX'_D)$. Thus, the equality of treatment constraint set can be studied by setting $\tilde{\mu}_A = \tilde{\mu}_D$ and $\tilde{\Sigma}_A = \tilde{\Sigma}_D$, the resulting constraints are:
\begin{eqnarray}
    \theta_A^T M \theta_A &=& \theta_D^T M \theta_D\nonumber\\
    \theta_A^T M \beta + \beta^T \mu_A &=& \theta_D^T M \beta + \beta^T \mu_D\nonumber\\
    \theta_A^T \mu_A &=& \theta_D^T \mu_D \\
    ||\theta_A||^2 &=& ||\theta_D||^2\nonumber\\
    \theta_A^T \beta &=& \theta_D^T \beta   \nonumber
    \label{eq: combined-constraints}
\end{eqnarray}
The next observation is that we can decompose this feasibility requirement into two, one which pertains to parameters that correspond to ``manipulable" features, and another which pertains to ``non-manipulable" features. For a general vector $v$ let $v_{m(i)} = {v_i 1_{\{M_i = 1\}}} $ and $v_{u(i)} = {v_i 1_{\{M_i = 0\}}} $.
The constraint \aptLtoX{C.2}{\ref{eq: combined-constraints}} will be satisfied by any $(\theta_A, \theta_D)$ that satisfies both the following:
\begin{eqnarray}
    ||\theta_{m, A}||^2 &=& ||\theta_{m,D}||^2\nonumber\\
    \theta_{m, A}^T \beta_m - \theta_{m, D}^T \beta_m &=& \beta^T \mu_D - \beta^T \mu_{A} \\
    \theta_{m, A}^T \mu_{m,A} &=& \theta_{m, D}^T \mu_{m, D}\nonumber
\label{eq: m-constraints}
\end{eqnarray}
\begin{eqnarray}
    ||\theta_{u, A}||^2 &=& ||\theta_{u,D}||^2\nonumber\\
    \theta_{u, A}^T \beta_u - \theta_{u, D}^T \beta_u &=& -\beta^T \mu_D + \beta^T \mu_{A} \\
    \theta_{u, A}^T \mu_{u,A} &=& \theta_{u, D}^T \mu_{u, D}\nonumber
\label{eq: u-constraints}\end{eqnarray}
This can be seen by making note of the following identities: $||\theta||_2 = ||\theta_m||^2 + ||\theta_u||^2$, $\theta^Tv = \theta_m^Tv_m + \theta_u^Tv_u$, $\theta^TM\theta = ||\theta_m||^2$, and $\theta^TMv = \theta_m^Tv_m$. The forms of constraints \aptLtoX{C.3}{\ref{eq: m-constraints}}, \aptLtoX{C.4}{\ref{eq: u-constraints}} are nearly identical so we only provide an analysis of the ``manipulable" constraints (constraint \aptLtoX{C.3}{\ref{eq: m-constraints}}). On-wards let $2d_m$ be the dimension of the parameter space $(\theta_{m, A}, \theta_{m, D})$ 

We begin with the quadratic constraint $||\theta_{m, A}||^2 = ||\theta_{m,D}||^2$. The key observation is that this is satisfied if and only if there exists $U \in \mathbb{O}(d_m)$ such that $\theta_{m, A} = U \theta_{m, D}$. Thus, for a fixed $U \in \mathbb{O}(d_m)$ we can write the constraint set as
\[\mathcal{Z}(U): (\theta_{m, A}, \theta_{m, D}) \text{ s.t.}: \left(\begin{array}{cc}
\beta_m & -\beta_m \\
\mu_{m, A}  & -\mu_{m, D}\\
I & -U^T
\end{array}\right) \left(\begin{array}{@{}c@{}}
\theta_{m, A} \\
\theta_{m, D}
\end{array}\right) = \left(\begin{array}{@{}c@{}}
b_0 \\
0\\
\:\: \: \: \:0_{d_m}
\end{array}\right).\]
Any choice of matrix $U$ is satisfactory. However, in order to exclude any $U$ that results in $\text{dim}[Z(U)] = 0$, we only select $U$ such that $(\beta_m, -\beta_m)^T \not \perp \text{Null}[I, -U^T ]$ and $(\mu_{m,A}, -\mu_{m, D})^T \not \perp \text{Null}[I, -U^T ]$ Together, the constraint set is a union of $d_m-2$ dimensional subspaces:
\[\mathcal{Z} = \cup_{U \in \mathcal{U}} \mathcal{Z}(U)\]
\[\mathcal{U} = \{U \in \mathbb{O}(d_m):
        \begin{pmatrix}
\beta_m ,-\beta_m\\
\mu_{A,m}, -\mu_{D,m}
\end{pmatrix} \not\perp \text{Null}[I,-U^T] \}\]
\end{proof}
\begin{proof}[Proof of corollary \ref{corr:ex-ante discrim Lambda} (ex-ante skill discrimination)]
    Note that now the joint distribution of $(\theta_g^T X'_g, \beta^TS'_g)$ satisfies
    \[(\theta_g^T X'_g, \beta^TS'_g) \sim \mathcal{N}(\tilde{\mu}_g, \tilde{\Sigma}_g)\]
\[\tilde{\mu}_g = (\theta^{T}_g \Lambda (\mu_g + M\Lambda^T\theta_g), \beta^{T}(\mu_g+M\Lambda^T\theta_g))\]
\[\tilde{\Sigma}_g = \begin{pmatrix}
\theta_g^T \Lambda \Lambda^T \theta_g + ||\theta_g||^2 & \beta^T\Lambda^T\theta^{}_g \\
\beta^T\Lambda^T\theta^{}_g  & ||\beta||^2 + 1 
\end{pmatrix}\]
The constraint $||\theta_A||_2^2 = ||\theta_D||_2^2$ is satisfied for any $U \in \mathbb{O}(q)$ and pair $\theta$ such that $\theta_A = U \theta_D$, thus for every $U$ there is a $q$-dimensional plane that satisfies the constraint  $||\theta_A||_2^2 = ||\theta_D||_2^2$. We begin with the initial stratified manifold $\cZ' = \cup_{U \in \mathbb{O}_q} \text{Null}[I -U^T]$. By the above, any $\theta \in \cZ'$ satisfies $||\theta_A||^2 = ||\theta_D||^2$.
Let $V'_U \in \cZ$ satisfy $ \text{span}\{\begin{pmatrix}
        \Lambda^ T & 0\\
        0 & \Lambda^T
    \end{pmatrix}v; v\in V'_U\} \} \cong \mathbb{R}^{2d}\}$. Under the transformation $T: \theta \rightarrow \tilde{\theta} \in \mathbb{R}^{2d}; \: \tilde{\theta} = (\Lambda^T \theta_A, \Lambda^T \theta_D)$, the remaining constraints (in terms of $\tilde{\theta}$) are identical to those in the proof of \ref{thm: ex-ante discrimination}. Thus, by theorem $\ref{thm: ex-ante discrimination}$ there is a manifold $\tilde{\cM} \subset \mathbb{R}^{2d}$ such that any $\tilde{\theta} \in \tilde{\cM}$ satisfies all remaining constraints necessary for equality of treatment. Thus, any $\theta \in \cM(U) \triangleq T^{-1}(\tilde{M}) \cap V'_U$ satisfies equality of treatment, and by the full rank property of $\Lambda$ restricted to $V'_U$, $\cM(U)$ is a manifold of dimension at least $\cO(d)$.
\end{proof}
\begin{theorem}[Theorem \ref{thm: ex-ante discrimination} cost discrimination] 
\label{thm: response discrimination}
Consider the causal strategic classification setting (example \ref{ex: causal strategic classification}). Suppose $\mu_A = \mu_D = 0$ and $c_A \neq c_D$. There exist sets $\cZ_{m, A}$, $\cZ_{m, D}$, $\cZ_{u, A}$, $\cZ_{u, D}$ of the form:
    \[\cZ_{m, A} = \cup_{(k_1, k_2) \in \cK} \: \cS^{A,m}_{(k_1, k_2)}; \cS^{A,m}_{(k_1, k_2)} \in \mathbb{S}(d_m-2) \]
     \[\cZ_{D, m} = \cup_{(k_1, k_2) \in \cK} \: \cS^{D,m}_{(k_1, k_2)}; \cS^{U,m}_{(k_1, k_2)} \in \mathbb{S}(d_m-2) \]
     \[\cZ_{A, u} = \cup_{(k_1, k_2) \in \cK} \: \cS^{A,u}_{(k_1, k_2)}; \cS^{A,u}_{(k_1, k_2)} \in \mathbb{S}(d_u-2) \]
     \[\cZ_{D, u} = \cup_{(k_1, k_2) \in \cK} \: \cS^{D,u}_{(k_1, k_2)}; \cS^{D,u}_{(k_1, k_2)} \in \mathbb{S}(d_u-2) \]
     \[\cK = \{(k_1, k_2) \in \mathbb{R}^+\times \mathbb{R}: k_1 > \frac{\sqrt{\frac{c_A}{c_D}}|k_2|}{\text{min}(||\beta_u||, ||\beta_m||)}\} \]
     such that for any policy $\theta = (\theta_{A, m}, \theta_{A, u}, \theta_{D, m}, \theta_{D, u})$ which satisfies $\theta \in \cZ_{m, A} \times \cZ_{m, D} \times \cZ_{u, A} \times \cZ_{u, D}$ also satisfies \emph{ex-post} equality.
\end{theorem}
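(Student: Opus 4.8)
I would follow the template of the proof of Theorem~\ref{thm: ex-ante discrimination}, replacing the family of orthogonal matrices by a two‑parameter family of spheres. By Proposition~\ref{prop:joint-dist-ind} (equivalently, by the tower‑property lemma used in that proof) it suffices to produce a policy $\theta=(\theta_A,\theta_D)$ for which the bivariate ``score'' $(\theta_g^\top X'_g,\ \beta^\top X'_g)$ has the same law for $g\in\{A,D\}$. Using Lemma~\ref{lem: strat-resp closed form} we have $X'_g = X_g + \tfrac{1}{c_g}M\theta_g$ with $M=\diag[B]$, so under Assumption~\ref{ass: labormarket-simplify} together with $\mu_A=\mu_D=0$ the score is Gaussian, $\mathcal N(\tilde\mu_g,\tilde\Sigma_g)$, with
\[
\tilde\mu_g=\Big(\tfrac{1}{c_g}\|\theta_{g,m}\|^2,\ \tfrac{1}{c_g}\beta_m^\top\theta_{g,m}\Big),\qquad
\tilde\Sigma_g=\begin{pmatrix}\|\theta_g\|^2 & \theta_g^\top\beta\\[2pt]\theta_g^\top\beta & \|\beta\|^2\end{pmatrix}.
\]
Imposing $\tilde\mu_A=\tilde\mu_D$ and $\tilde\Sigma_A=\tilde\Sigma_D$ reduces the problem to four scalar equations:
\begin{align*}
\tfrac1{c_A}\|\theta_{A,m}\|^2 &= \tfrac1{c_D}\|\theta_{D,m}\|^2, &
\tfrac1{c_A}\beta_m^\top\theta_{A,m} &= \tfrac1{c_D}\beta_m^\top\theta_{D,m},\\
\|\theta_A\|^2 &= \|\theta_D\|^2, &
\theta_A^\top\beta &= \theta_D^\top\beta.
\end{align*}

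Second, I would split along the manipulable/non‑manipulable coordinates via $\theta^\top M\theta=\|\theta_m\|^2$, $\theta^\top M v=\theta_m^\top v_m$, $\|\theta\|^2=\|\theta_m\|^2+\|\theta_u\|^2$ and $\theta^\top v=\theta_m^\top v_m+\theta_u^\top v_u$. The first two equations involve only the manipulable blocks and, because of the $1/c_g$ factors, force $\|\theta_{A,m}\|^2=\tfrac{c_A}{c_D}\|\theta_{D,m}\|^2$ and $\beta_m^\top\theta_{A,m}=\tfrac{c_A}{c_D}\beta_m^\top\theta_{D,m}$; this $c_A/c_D$ asymmetry (absent in the ex‑ante case) is exactly why the $A$‑ and $D$‑constraint sets differ and why they are unions of \emph{spheres} rather than affine subspaces.

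Third, I would exhibit the explicit two‑parameter family. Fix $(k_1,k_2)\in\cK$ and declare $\|\theta_{A,m}\|^2=\|\theta_{D,u}\|^2=k_1^2$, $\|\theta_{D,m}\|^2=\|\theta_{A,u}\|^2=\tfrac{c_D}{c_A}k_1^2$, together with $\beta_m^\top\theta_{D,m}=\beta_u^\top\theta_{A,u}=k_2$ and $\beta_m^\top\theta_{A,m}=\beta_u^\top\theta_{D,u}=\tfrac{c_A}{c_D}k_2$. A direct substitution checks all four scalar equations (the first two by the $c_A/c_D$ scaling; the norm equation because $k_1^2+\tfrac{c_D}{c_A}k_1^2$ is symmetric under $A\leftrightarrow D$, and likewise for $\theta^\top\beta$). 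For fixed $(k_1,k_2)$, each block $\theta_{g,s}$ is thereby confined to $\{\,v\in\reals^{d_s}:\|v\|^2=\rho_{g,s}^2,\ \beta_s^\top v=p_{g,s}\,\}$, which is a hypersphere of dimension $d_s-2$ exactly when $\rho_{g,s}\|\beta_s\|>|p_{g,s}|$. Among the four inequalities the binding ones are those for the large‑radius blocks $\theta_{D,m}$ and $\theta_{A,u}$, namely $\sqrt{c_D/c_A}\,k_1\|\beta_m\|>|k_2|$ and $\sqrt{c_D/c_A}\,k_1\|\beta_u\|>|k_2|$, which together are precisely $k_1>\sqrt{c_A/c_D}\,|k_2|/\min(\|\beta_u\|,\|\beta_m\|)$ --- the condition defining $\cK$. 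Taking $\cZ_{m,A}=\bigcup_{(k_1,k_2)\in\cK}\cS^{A,m}_{(k_1,k_2)}$ (and analogously for the other three), every element of the product of the four spheres attached to a common $(k_1,k_2)$ satisfies the four equations, so the reduction of the first paragraph yields ex‑post equality. The four $(d_m-2)$‑ and $(d_u-2)$‑spheres plus the two free parameters give a feasible set of dimension $\Theta(d)$, in line with Corollary~\ref{corr:ex-ante discrim Lambda}.

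The step I expect to be the main obstacle is the third one: unlike the ex‑ante case, the two covariance equations mix all four blocks and the per‑block norm constraints are asymmetric, so one must find the right ``cross‑pairing'' --- which blocks carry the large radius $\sqrt{c_D/c_A}\,k_1$ versus the small radius $k_1$, and which carry $k_2$ versus $\tfrac{c_A}{c_D}k_2$ --- so that (a) the covariance equations collapse automatically, (b) every resulting block set is a nonempty sphere of the claimed codimension, and (c) only two parameters survive. Verifying non‑degeneracy of all four spheres simultaneously and checking that the binding constraints reproduce exactly the stated $\cK$ is the delicate bookkeeping; the remaining ingredients are the Gaussian moment computation of the first paragraph and the elementary block identities of the second.
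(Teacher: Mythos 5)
Your proposal is correct and follows essentially the same route as the paper's proof: match the Gaussian mean and covariance of $(\theta_g^\top X'_g,\beta^\top X'_g)$ across groups, split the four resulting scalar equations into manipulable and non-manipulable blocks via the identities $\theta^\top M\theta=\|\theta_m\|^2$ and $\theta^\top M v=\theta_m^\top v_m$, and realize each block constraint as a nonempty sphere-hyperplane intersection of codimension $2$ indexed by $(k_1,k_2)\in\cK$. Your assignment of radii (e.g.\ $\|\theta_{A,m}\|^2=k_1^2$, $\|\theta_{D,m}\|^2=\tfrac{c_D}{c_A}k_1^2$) differs from the paper's ($\|\theta_{A,m}\|^2=\tfrac{c_A}{c_D}k_1^2$, $\|\theta_{D,m}\|^2=k_1^2$) only by the reparametrization $k_1\mapsto\sqrt{c_D/c_A}\,k_1$, and both verifications of the four equations and of the non-degeneracy condition defining $\cK$ go through identically.
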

\begin{proof}
 By lemma \ref{lem: strat-resp closed form} $(\theta_g^T X'_g, \beta^T X'_g)$ are generated in the manner $\beta^T X'_g = \beta^T(X_g + \frac{1}{c_g}M{\theta_g})$ and $\theta_g^T X'_g = {\theta}_g^T (X_g +\frac{1}{c_g}M{\theta_g}+\epsilon)$. 

  Under the normality assumption in \ref{ass: labormarket-simplify}, the ex-post joint distribution of the (pre-discretized) outcomes and predictions conditioned on sensitive trait is
\[(\theta_g^T X'_g, \beta^TX'_g) \sim \mathcal{N}(\tilde{\mu}_g, \tilde{\Sigma}_g)\]
\[\tilde{\mu}_g = (\frac{1}{c_g}\theta^{T}_gM\theta_g, \frac{1}{c_g}\beta^{T}M\theta_g)\]
\[\tilde{\Sigma}_g = \begin{pmatrix}
||\theta_g||^2 & \theta^{T}_g\beta \\
\theta^{T}_g\beta  & ||\beta||^2  
\end{pmatrix}\]
     By the above lemma equality of treatments will be satisfied by any policy that satisfies $(\theta_A^T X'_A, \beta^TX'_A) \deq (\theta_D^T X'_D, \beta^TX'_D)$. Thus, using a nearly identical argument as the proof for the first part of theorem \ref{thm: response discrimination} the equality of treatments constraint set can be studied by setting $\tilde{\mu}_A = \tilde{\mu}_D$ and $\tilde{\Sigma}_A = \tilde{\Sigma}_D$, the resulting constraints are:
\begin{eqnarray}
    \frac{1}{c_A}\theta_A^T M \theta_A &=& \frac{1}{c_D}\theta_D^T M \theta_D\nonumber\\
    \frac{1}{c_A}\theta_A^T M \beta &=& \frac{1}{c_D}\theta_D^T M \beta\\
    ||\theta_A||^2 &=& ||\theta_D||^2\nonumber\\
    \theta_A^T \beta &=& \theta_D^T \beta   \nonumber
    \label{eq: combined-constraints-2}
\end{eqnarray}
Again, we can decompose this feasibility requirement into four constraints, each which pertains to parameters that correspond to ``manipulable"/``non-manipulable" and a sensitive trait value .
The constraint \aptLtoX{C.5}{\ref{eq: combined-constraints-2}} will be satisfied by any $(\theta_A, \theta_D)$ that satisfies all four the following for any pair $(k_1, k_2)$:
\begin{eqnarray}
    ||\theta_{m, A}||^2 &=& \frac{c_A}{c_D} k_1^2\nonumber\\
    \theta_{m, A}^T \beta_m &=& \frac{c_A}{c_D}k_2
    \label{eq: A-m-constraints-2}
\end{eqnarray}
\begin{eqnarray}
    ||\theta_{m,D}||^2 &=& k_1^2\nonumber\\
     \theta_{m, D}^T \beta_m = k_2
     \label{eq: D-m-constraints-2}
\end{eqnarray}
\begin{eqnarray}
    ||\theta_{u, A}||^2 &=& k_1^2\nonumber\\
    \theta_{u, A}^T \beta_u &=& k_2 
    \label{eq: A-u-constraints-2}
\end{eqnarray}
\begin{eqnarray}
    ||\theta_{u,D}||^2 &=& \frac{c_A}{c_D} k_1^2\nonumber\\
    \theta_{u, D}^T \beta_u &=& \frac{c_A}{c_D} k_2
    \label{eq: D-u-constraints-2}
\end{eqnarray}
We have again used each of the following identities $||\theta||_2 = ||\theta_m||^2 + ||\theta_u||^2$, $\theta^Tv = \theta_m^Tv_m + \theta_u^Tv_u$, $\theta^TM\theta = ||\theta_m||^2$, and $\theta^TMv = \theta_m^Tv_m$. Consider the first constraint set 
\aptLtoX{C.5}{\ref{eq: A-m-constraints-2}}; the constraint $||\theta_{m, A}||^2 = \frac{c_A}{c_D} k_1^2$ is a hypersphere (of dimension $d_{m,A} -1$) with radius $ \frac{c_A}{c_D} k_1^2$. The constraint  $\theta_{m, A}^T \beta_m =  \frac{c_A}{c_D}k_2 \iff \theta_{m, A}^T \frac{\beta_m}{||\beta_m||} = \frac{c_A}{c_D}\frac{k_2}{||\beta_m||}$ is a hyperplane. It is easy to see that the intersection between these two geometric objects is either empty, (if $|\frac{c_A}{c_D}\frac{k_2}{||\beta_m||}|> \sqrt{\frac{c_A}{c_D} k_1^2}$ ) or is a hypersphere of dimension $d_{m,A}-2$ (if $|\frac{c_A}{c_D}\frac{k_2}{||\beta_m||}|< \sqrt{\frac{c_A}{c_D} k_1^2}$). Note that an identical argument can be applied to each of the other constraint sets, \ref{eq: D-m-constraints-2}, \ref{eq: A-u-constraints-2}, \ref{eq: D-u-constraints-2}. Each of these sets will be non-empy so long as $|k_1| > \frac{\sqrt{\frac{c_A}{c_D}}k_2}{\text{min}(||\beta_u||, ||\beta_m||)}$.
\end{proof}
\begin{proof}[Proof of corollary \ref{corr:ex-ante discrim Lambda} (cost discrimination)]
    Note that now the joint distribution of $(\theta_g^T X'_g, \beta^TS'_g)$ satisfies
    \[(\theta_g^T X'_g, \beta^TS'_g) \sim \mathcal{N}(\tilde{\mu}_g, \tilde{\Sigma}_g)\]
\[\tilde{\mu}_g = (\theta^{T}_g \Lambda (\frac{1}{c_g} M\Lambda^T\theta_g), \beta^{T}(\frac{1}{c_g}M\Lambda^T\theta_g))\]
\[\tilde{\Sigma}_g = \begin{pmatrix}
\theta_g^T \Lambda \Lambda^T \theta_g + ||\theta_g||^2 & \beta^T\Lambda^T\theta^{}_g \\
\beta^T\Lambda^T\theta^{}_g & ||\beta||^2 + 1 
\end{pmatrix}\]
The constraint $||\theta_A||_2^2 = ||\theta_D||_2^2$ is satisfied for any $U \in \mathbb{O}(q)$ and pair $\theta$ such that $\theta_A = U \theta_D$, thus for every $U$ there is a $q$-dimensional plane that satisfies the constraint  $||\theta_A||_2^2 = ||\theta_D||_2^2$. We begin with the initial stratified manifold $\cZ' = \cup_{U \in \mathbb{O}_q} \text{Null}[I -U^T]$, using the above, any $\theta \in \cZ'$ satisfies $||\theta_A||^2 = ||\theta_D||^2$.
Let $V'_U \in \cZ$ satisfy $ \text{span}\{\begin{pmatrix}
        \Lambda^ T & 0\\
        0 & \Lambda^T
    \end{pmatrix}v; v\in V'_U\} \} \cong \mathbb{R}^{2d}\}$. Under the transformation $T: \theta \rightarrow \tilde{\theta} \in \mathbb{R}^{2d}; \: \tilde{\theta} = (\Lambda^T \theta_A, \Lambda^T \theta_D)$, the remaining constraints (in terms of $\tilde{\theta}$) are identical to those in the proof of \ref{thm: response discrimination}. Thus, by theorem $\ref{thm: response discrimination}$ there is a manifold $\tilde{\cM} \subset \mathbb{R}^{2d}$ such that any $\tilde{\theta} \in \tilde{\cM}$ satisfies all remaining constraints necessary for equality of treatment. Thus, any $\theta \in \cM(U) \triangleq T^{-1}(\tilde{M}) \cap V'_U$ satisfies equality of treatment, and by the full rank property of $\Lambda$ restricted to $V'_U$, $\cM(U)$ is a manifold of dimension at least $\cO(d)$.
\end{proof}
\subsection{Proof of impossibility results}
\begin{proof}[Proof of theorem \ref{thm: informal}:]
Under the unbiasedness assumption on test signals, the optimal hiring policy for the firm will be of the form $f(x, g) = \mathbf{1}\{x\geq \theta_g\}$. Consider a firm that uses a group blind threshold policies of the form $f(x, g) = \mathbf{1}\{x\geq \theta\}$ to hire workers, where $\theta\in\reals$ is the threshold value for both groups. The workers' expected utility for increasing skills from $s$ to $s^*$ is
\begin{equation}\textstyle
u_w(f,s,s^*, g) = w\bar{F}(\theta \mid s^*) - c(s,s^*),
\label{eq:workers-utility}
\end{equation}
where $\bar{F}(\theta\mid s^*) \triangleq \Pr\{X > \theta\mid S=s^*\}$ is the survival function of the skill level assessment.

In the labor market example, the worker (in group $g$) response 
\[\textstyle
s'(s, f g)\triangleq\argmax_{s^*}u_w(f,s,s^*, g)
\]
is generally non-decreasing. For example, suppose $c_A = c_D = c$ and $c(s,s^*) = \frac{c}{2}[s^*-s]_+^2$, where  $[\,\cdot\,]_+\triangleq\max\{0,\cdot\}$ is the ReLU function, then the derivative of the worker response is
\[
\frac{\partial s'(s, f, g)}{\partial_s} = -\frac{\partial_s\partial_{s^*}c(y,s^*)|_{s^* = s'(s, f, g)}}{\partial_{s^*}^2u_w(f,y,s^*)|_{s^* = s'(s,f,g)}} = \frac{c - w\partial_y^2\bar{F}(t\mid s'(s,f,g))}{c},
\]
so $\frac{\partial s'(s, f, g)}{\partial_s} > 0$ as long as $c$ is large enough. In general, We are unconcerned with settings in which $ s'(s, f, g)$ is not non-decreasing in $s$ this would be both unintuitive and unrealistic. Under the assumption that $S|A$ stochastically dominates $S|D$, the non-decreasingness of $ s'(s, f, g)$ in $s$ implies $S'|A$ is stochastically dominates $S'|D$, which precludes equal \emph{ex post} responses (in particular $\mathbb{E}[Y'|A] > \mathbb{E}[Y'|D]$).

On the other hand, suppose that $S|A \deq S|D$ but $c_A < c_D$, keeping the assumption that policies are group blind. Note that
\[\frac{\partial}{\partial_c} s'(s, f, c) = \frac{w \partial_{s^*}^2 \bar{F}(t|s^*)|_{s^* = s'(s,f,c)} - c}{[s'(s, f, c) - s]_+}\]
This implies that if $c_A, c_D$ are large enough with $c_A < c_D$ two workers from each group with $s_A = s_D$ will have $s'(s, f, c_A) > s'(s, f, c_D)$ implying $S'|A$ stochastically dominates $S'|D$, which precludes equal \emph{ex post} responses. 

Finally, the fact that only group blind policies will satisfy equality of outcomes with respect to $m_{\text{FPR}}(f, g)$ and $m_{\text{FNR}}(f, g)$ follows immediately from the market assumption that $X'$ is independent of $G$ given $Y'$.
\end{proof}
\section{Appendix A Proofs}
\subsection{Proof of Proposition \ref{prop: linearity} }
\begin{proof}
We prove the statement on $\mu(Q)$, the proof for $\text{EPR}(Q)$ is identical.
We have: \[\mu_{ij}(Q) = \mathbb{E}_{(X', Y'), Q, \cG}[h_i(Q(X'), Y')|\mathcal{G} = g_j]\]
By the law of total (conditional) expectation this is equivalent to
\[\mu_{ij}(Q) = \sum_{f \in \cF} P(Q = f) \mathbb{E}_{(X', Y'), Q}[h_i(Q(X'), Y')|{G}\] \[= g_j, Q = f]P(Q=f|{G}=g_j)\]
By assumption $P(Q=f|{G}=g_j) = P(Q=f)$. Note also that $ \mathbb{E}_{(X', Y') \sim \cD(Q), Q}[h_i(Q(X'), Y')|{G} = g_j, Q = f] = \mathbb{E}_{(X', Y') \sim \cD(f)}[h_i(f(X'), Y')|\mathcal{G} = g_j] = \mu_{ij}(f)$. Thus
\[\mu_{ij}(Q) = \sum_{f \in \cF} P(Q=f) \mu_{ij}(f) =  \sum_{f \in \cF} P(Q=f) \mu_{ij}(f) = \sum_{f \in \cF} Q(f) \mu_{ij}(f)\]
\end{proof}
\subsection{Proof of Generalization Error}
\begin{proposition}[\citet{agarwal2018Reductions} Lemma 3]
\label{prop: fairness_optimization_error}
    Suppose that the constraint $M\hat{\mu}(Q) \leq \hat{c}$ is feasible. Then if $\hat{Q}$ is a $\epsilon$-saddle point of
    equation \ref{eq: sample dual} the following holds:
    \[||M\hat{\mu}(\hat{Q})-\hat{c}||_{\infty} \leq \frac{1+2\epsilon}{B}\]
\end{proposition}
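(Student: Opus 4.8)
The plan is to run the classical reductions argument of \citet{agarwal2018Reductions} (their Lemma 3), now in the randomized ex-post setting where, by Proposition \ref{prop: linearity}, both $\widehat{\text{EPR}}$ and $M\hat{\mu}(\cdot)$ are linear in $Q\in\Delta(\cF)$. Abbreviate $v\triangleq M\hat{\mu}(\hat{Q})-\hat{c}$ and write the empirical Lagrangian from \eqref{eq: sample dual} as $\cL(Q;\lambda)=\widehat{\text{EPR}}(Q)+\lambda^T\!\big(M\hat{\mu}(Q)-\hat{c}\big)$, with $\lambda$ ranging over $\Lambda_B\triangleq\{\lambda\ge 0:\|\lambda\|_1\le B\}$. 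That $(\hat{Q},\hat{\lambda})$ is an $\epsilon$-saddle point of \eqref{eq: sample dual} means
\[\cL(\hat{Q},\hat{\lambda})\le\min_{Q\in\Delta(\cF)}\cL(Q,\hat{\lambda})+\epsilon \qquad\text{and}\qquad \cL(\hat{Q},\hat{\lambda})\ge\max_{\lambda\in\Lambda_B}\cL(\hat{Q},\lambda)-\epsilon.\]

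First I would evaluate the dual best response: for a fixed vector $v$, $\max_{\lambda\in\Lambda_B}\lambda^T v = B\,[\max_k v_k]_+$, since the maximizer puts its entire $\ell_1$ budget $B$ on the most violated coordinate; hence the second inequality gives $\cL(\hat{Q},\hat{\lambda})\ge\widehat{\text{EPR}}(\hat{Q})+B\,[\max_k v_k]_+-\epsilon$. Next I would insert a feasible comparator into the first inequality: by hypothesis there is $Q_0\in\Delta(\cF)$ with $M\hat{\mu}(Q_0)\le\hat{c}$, and since $\hat{\lambda}\ge 0$ the penalty $\hat{\lambda}^T(M\hat{\mu}(Q_0)-\hat{c})\le 0$, so $\cL(\hat{Q},\hat{\lambda})\le\widehat{\text{EPR}}(Q_0)+\epsilon$. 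Chaining the two bounds yields $B\,[\max_k v_k]_+\le\widehat{\text{EPR}}(Q_0)-\widehat{\text{EPR}}(\hat{Q})+2\epsilon\le 1+2\epsilon$, where the last step uses that the ex-post risk lies in an interval of length $1$ (under the compactness assumption the regularizer is uniformly bounded and may be folded into the normalization). Dividing by $B$ gives $(M\hat{\mu}(\hat{Q})-\hat{c})_k\le(1+2\epsilon)/B$ for every coordinate $k$. To pass to the stated $\ell_\infty$ bound I would then invoke the $\pm$-pair structure of the moment matrix (Example \ref{ex: moment-binary-class}): each equality-type fairness constraint of Definitions \ref{def:ex-post-equity}--\ref{def:ex-post-group-fairness} contributes a row and its negation, so for every $k$ there is a $k'$ with $(M\hat{\mu}(\hat{Q}))_{k'}=-(M\hat{\mu}(\hat{Q}))_k$; together with $c=0$ and a relaxation applied symmetrically across a pair, the componentwise upper bound over all coordinates is exactly a bound on $\|M\hat{\mu}(\hat{Q})-\hat{c}\|_\infty$, which is the claim.

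I expect the only delicate point to be this final step. The saddle-point manipulation natively controls the \emph{constraint violation} $[M\hat{\mu}(\hat{Q})-\hat{c}]_+$ and nothing else, so recovering the full $\ell_\infty$ statement genuinely relies on the way the equality constraints are encoded as opposite inequality pairs (and on $\hat{c}=c+\nu$ respecting that symmetry). Everything preceding it is two lines of algebra on the two defining saddle-point inequalities, plus the boundedness of $\widehat{\text{EPR}}$ and the existence of a feasible $Q_0$ supplied by the hypothesis.
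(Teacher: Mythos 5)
Your argument is correct and is exactly the standard saddle-point computation from \citet{agarwal2018Reductions} (Lemma 3), which is all the paper relies on as well---the proposition is imported by citation and no independent proof is given in the text. The only imprecision is in your final step: with the relaxation $\hat{c}=c+\nu$, the $\pm$-pair encoding gives $\|M\hat{\mu}(\hat{Q})-\hat{c}\|_\infty\le\frac{1+2\epsilon}{B}+2\nu$ rather than ``exactly'' the stated bound (the slack vanishes when $\nu=0$ and is absorbed into the $\tilde{\cO}(\min(n_A,n_D)^{-1/2})$ terms where the proposition is used), and likewise the constant $1$ should strictly be the diameter of the range of $\widehat{\text{EPR}}$ including the regularizer, a point you already flag.
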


\begin{proposition}[\citet{agarwal2018Reductions} Lemma 2]
\label{prop: risk_optimization_error}
    If $\hat{Q}$ is an $\epsilon$-saddle point, then for all $Q$ such that $M\hat{\mu}(Q) \leq c$ the distribution $\hat{Q}$ satisfies 
    \[\widehat{\text{EPR}}(\hat{Q}) \leq \widehat{\text{EPR}}({Q}) + 2\epsilon.\]
\end{proposition}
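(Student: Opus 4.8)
The plan is to run the textbook Lagrangian saddle-point argument, specialized to the empirical problem \eqref{eq: sample dual}. Write $\widehat{\cL}(Q,\lambda) = \widehat{\text{EPR}}(Q) + \lambda^{\top}\bigl(M\hat{\mu}(Q) - \hat{c}\bigr)$ for the empirical Lagrangian, with the primal variable ranging over $Q \in \Delta(\cF)$ and the dual variable over the compact set $\Lambda = \{\lambda \geq 0 : \|\lambda\|_1 \leq B\}$. By the definition of an $\epsilon$-saddle point recalled in the body, the pair $(\hat{Q},\hat{\lambda})$ simultaneously satisfies a primal near-optimality inequality, $\widehat{\cL}(\hat{Q},\hat{\lambda}) \leq \min_{Q \in \Delta(\cF)} \widehat{\cL}(Q,\hat{\lambda}) + \epsilon$, and a dual near-optimality inequality, $\widehat{\cL}(\hat{Q},\hat{\lambda}) \geq \max_{\lambda \in \Lambda} \widehat{\cL}(\hat{Q},\lambda) - \epsilon$. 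The entire proof is a matter of evaluating these two inequalities at the right test points.

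First I would lower-bound $\widehat{\cL}(\hat{Q},\hat{\lambda})$: since $\lambda = 0$ lies in $\Lambda$, the dual inequality gives $\widehat{\cL}(\hat{Q},\hat{\lambda}) \geq \widehat{\cL}(\hat{Q},0) - \epsilon = \widehat{\text{EPR}}(\hat{Q}) - \epsilon$. Next I would upper-bound $\widehat{\cL}(\hat{Q},\hat{\lambda})$ using the primal inequality at an arbitrary feasible $Q$, i.e. one with $M\hat{\mu}(Q) \leq \hat{c}$: $\widehat{\cL}(\hat{Q},\hat{\lambda}) \leq \widehat{\cL}(Q,\hat{\lambda}) + \epsilon = \widehat{\text{EPR}}(Q) + \hat{\lambda}^{\top}\bigl(M\hat{\mu}(Q)-\hat{c}\bigr) + \epsilon$; because $\hat{\lambda} \geq 0$ and $M\hat{\mu}(Q)-\hat{c} \leq 0$ coordinatewise, the inner product is nonpositive, hence $\widehat{\cL}(\hat{Q},\hat{\lambda}) \leq \widehat{\text{EPR}}(Q) + \epsilon$. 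Chaining the two bounds yields $\widehat{\text{EPR}}(\hat{Q}) - \epsilon \leq \widehat{\text{EPR}}(Q) + \epsilon$, i.e. $\widehat{\text{EPR}}(\hat{Q}) \leq \widehat{\text{EPR}}(Q) + 2\epsilon$, which is the claim.

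There is essentially no genuine obstacle here: the argument is purely algebraic once the definition of the $\epsilon$-saddle point is available, and it uses neither convexity, nor the linearity in $Q$ from Proposition \ref{prop: linearity}, nor any performativity structure --- those ingredients are needed elsewhere to guarantee that such a saddle point exists and is reached by Algorithm \ref{alg: ex-post-reduction}, not for this particular inequality. The only points that warrant care are (i) checking that $0 \in \Lambda$ so the $\lambda = 0$ test point is admissible in the dual maximization, and (ii) the bookkeeping between the nominal constraint level $c$ appearing in the statement and the relaxed level $\hat{c} = c + \nu$ actually used in \eqref{eq: sample dual}: the feasibility hypothesis should be read with respect to the constraint actually enforced, and any $Q$ with $M\hat{\mu}(Q) \leq c \leq \hat{c}$ is covered. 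This proposition is precisely the optimization-error half of the generalization guarantee in Theorem \ref{thm: error analysis}, to be combined with the statistical (uniform-convergence) half.
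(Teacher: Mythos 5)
Your argument is correct and is exactly the standard saddle-point evaluation used to prove Lemma~2 of \citet{agarwal2018Reductions}, which is the only ``proof'' the paper offers (it states the proposition by citation and gives no independent argument). Your handling of the $c$ versus $\hat{c}=c+\nu$ bookkeeping and the admissibility of the $\lambda=0$ test point are the right points to flag, and nothing further is needed.
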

The technical tool we use to study the generalization properties of algorithm \ref{alg: ex-post-reduction} is the Rademacher complexity, which we now define.
\begin{definition}
    Let $\cF$ be a class of functions $f: \cX \rightarrow [0,1]$ and $\epsilon_i$ be i.i.d. Rademacher random variables. the Rademacher complexity of $\cF$ is defined as
    \[\cR_n(\cF) \triangleq \sup_{x_1, \dots, x_n \in \cX} \mathbb{E}_{\epsilon} \sup_{f\in\cF} |\frac{1}{n}\sum_{i=1}^n \epsilon_i f(x_i)|\]
\end{definition}
The primary obstacle to proving a generalization bound will be to establish bounds on the Rademacher complexity of the function classes $\mu_i(\theta), \hat{\text{EPR}}(\theta)$. Beyond this the analysis is a standard application of the arguments in \citep{agarwal2018Reductions}.
\begin{lemma}
\label{lem: Error Analysis}
    Let $\cX \triangleq\{x\in \mathbb{R}^d; ||x|| \leq C\}$, let $\cH_{\theta} = \{\theta^Tx; x\in \cX, \theta \in \Theta\}$.   $$\hat{\mu}_1(\theta) = \sum_{i=1}^n\sigma(\theta^T(x_i+Ma(\theta))),$$
    $$\hat{\mu}_2 = \sum_{i=1}^n\sigma(\beta^T(x_i+Ma(\theta))),$$
    $$\hat{\mu}_3 = \sum_{i=1}^n \sigma(\theta^T(x_i+Ma(\theta)))\sigma(\beta^T(x_i+Ma(\theta))).$$
    Then the following hold:
    \begin{enumerate}
        \item With probability at least $1-\delta$ for all $\theta \in \Theta$, 
        \[|\hat{\mu_1}(\theta) - \mu_1(\theta)| \leq 2R_n(\cH_{\theta}) + 2\sup_{\theta \in \Theta}|\theta^TMa(\theta)|/(\sqrt{n})+\sqrt{\frac{ln(2/\delta)}{2n}} \sim \tilde{\cO}(n^{-1/2})\]
        \item With probability at least $1-\delta$ for all $\theta \in \Theta$ 
        \[|\hat{\mu_2}(\theta) - \mu_2(\theta)| \leq 2||\beta||C/\sqrt{n}+2\sup_{\theta \in \Theta}|\beta^TMa(\theta)|/\sqrt{n} + \sqrt{\frac{ln(2/\delta)}{2n}} \sim \tilde{\cO}(n^{-1/2})\]
        \item With probability at least $1-\delta$ for all $\theta \in \Theta$
        \[|\hat{\mu_3}(\theta) - \mu_3(\theta)|\leq\] \[2\sqrt{2}||\beta||C/\sqrt{n}+2\sqrt{2}\sup_{\theta \in \Theta}|\beta^TMa(\theta)|/\sqrt{n} +  2\sqrt{2}R_n(\cH_{\theta})\] \[+ 2\sqrt{2}\sup_{\theta \in \Theta}|\theta^TMa(\theta)|/(\sqrt{n})+\sqrt{\frac{ln(2/\delta)}{2n}}\sim \tilde{\cO}(n^{-1/2})\]
    \end{enumerate}
\end{lemma}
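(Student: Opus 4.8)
The plan is a standard uniform-convergence argument: control $\mathbb{E}\sup_{\theta\in\Theta}|\hat\mu_j(\theta)-\mu_j(\theta)|$ by Rademacher complexity and then pass to a high-probability statement via McDiarmid's bounded-differences inequality. First I would record three structural facts. By the closed form of the strategic response in Lemma~\ref{lem: strat-resp closed form}, $a(\theta)$ is a deterministic function of $\theta$, so each estimator is a genuine empirical average $\hat\mu_j(\theta)=\tfrac1n\sum_{i=1}^n g_j(x_i;\theta)$ over the i.i.d.\ ex-ante sample, with every integrand lying in $[0,1]$ (a sigmoid, or a product of two sigmoids) and $\mathbb{E}[\hat\mu_j(\theta)]=\mu_j(\theta)$. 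Consequently the symmetrization inequality gives $\mathbb{E}\sup_\theta|\hat\mu_j(\theta)-\mu_j(\theta)|\le 2R_n(\mathcal{G}_j)$, where $\mathcal{G}_j=\{x\mapsto g_j(x;\theta):\theta\in\Theta\}$, and McDiarmid (each integrand in $[0,1]$ gives a bounded-differences constant $1/n$) contributes the additive $\sqrt{\ln(2/\delta)/(2n)}$ tail term to each in-expectation bound.

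For part (1), $\mathcal{G}_1=\{x\mapsto\sigma(\theta^\top x+\theta^\top Ma(\theta))\}$. Since $\sigma$ is $1$-Lipschitz, the Ledoux--Talagrand contraction principle bounds $R_n(\mathcal{G}_1)$ by the Rademacher complexity of the affine class $\{x\mapsto\theta^\top x+\theta^\top Ma(\theta):\theta\in\Theta\}$, and subadditivity of Rademacher complexity splits this into $R_n(\cH_\theta)$ for the linear part plus the complexity of the input-free but $\theta$-dependent offsets, the latter being at most $\sup_{\theta}|\theta^\top Ma(\theta)|\cdot\mathbb{E}|\tfrac1n\sum_i\epsilon_i|\le \sup_\theta|\theta^\top Ma(\theta)|/\sqrt n$. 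Multiplying by the factor $2$ from symmetrization and adding the McDiarmid term gives exactly the stated bound. Part (2) is the same computation with $\beta$ held fixed: the ``linear part'' is now the single function $x\mapsto\beta^\top x$, whose Rademacher term is $\le\|\beta\|C/\sqrt n$ by Cauchy--Schwarz together with $\mathbb{E}\|\tfrac1n\sum_i\epsilon_i x_i\|_2\le C/\sqrt n$, and the offsets contribute $\sup_\theta|\beta^\top Ma(\theta)|/\sqrt n$.

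For part (3) the integrand is the product $\sigma(\theta^\top x')\,\sigma(\beta^\top x')$ with $x'=x+Ma(\theta)$. I would handle the product by viewing $(u,v)\mapsto\sigma(u)\sigma(v)$ as a map $\mathbb{R}^2\to\mathbb{R}$ that is $1$-Lipschitz in each coordinate (the other factor lies in $[0,1]$) and hence $\sqrt2$-Lipschitz in the Euclidean norm, so a vector-valued contraction inequality bounds $R_n(\mathcal{G}_3)$ by $\sqrt2$ times the sum of the Rademacher complexities of the two coordinate classes $\{x\mapsto\theta^\top x'\}$ and $\{x\mapsto\beta^\top x'\}$, which were already bounded in the previous paragraph by $R_n(\cH_\theta)+\sup_\theta|\theta^\top Ma(\theta)|/\sqrt n$ and $\|\beta\|C/\sqrt n+\sup_\theta|\beta^\top Ma(\theta)|/\sqrt n$ respectively. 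Multiplying by the symmetrization factor $2$ produces the four terms with the $2\sqrt2$ constants, and adding the McDiarmid tail completes it. Finally, the $\tilde{\cO}(n^{-1/2})$ summaries follow from $R_n(\cH_\theta)\le(\sup_{\theta\in\Theta}\|\theta\|_2)\,C/\sqrt n$ and from finiteness of $\sup_\theta|\theta^\top Ma(\theta)|$ and $\sup_\theta|\beta^\top Ma(\theta)|$, which is exactly where compactness of $\Theta$ and boundedness of $Ma(\theta)$ and $\|X\|_\infty$ enter.

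The only genuinely delicate step is the product term $\hat\mu_3$: the scalar contraction principle does not apply to it directly, so one must either invoke a vector-contraction lemma against the $\sqrt2$-Lipschitz Euclidean bound as above, or peel the product by hand via $|\sigma(u_1)\sigma(v_1)-\sigma(u_2)\sigma(v_2)|\le|u_1-u_2|+|v_1-v_2|$ and then distribute the supremum, which is the route that most transparently yields the $\sqrt2$ factors. A cosmetic wrinkle is that $\sigma(0)=\tfrac12\neq0$, so the contraction steps should be applied to the centered, odd, $1$-Lipschitz map $\sigma-\tfrac12$; the leftover constant $\tfrac12$ contributes only a further $O(n^{-1/2})$ term that is absorbed into the bounds. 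Everything else is routine bookkeeping.
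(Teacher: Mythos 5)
Your argument follows the paper's proof essentially step for step: symmetrization plus a bounded-differences concentration inequality for the $\sqrt{\ln(2/\delta)/(2n)}$ tail, Talagrand contraction to strip the sigmoid and split the linear part $R_n(\cH_\theta)$ from the $\theta$-dependent offset $\sup_\theta|\theta^\top Ma(\theta)|/\sqrt n$, and a Maurer-type vector contraction with Lipschitz constant $\sqrt2$ for the product term, yielding exactly the stated constants. The only (harmless) divergence is that you apply the vector contraction to $(u,v)\mapsto\sigma(u)\sigma(v)$ against the linear classes while the paper applies it to $\psi(x,y)=xy$ against the two sigmoid classes already bounded in parts (1)--(2); your remark about centering $\sigma$ so that the contraction hypothesis $\phi(0)=0$ holds is a detail the paper glosses over.
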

\begin{proof}
\begin{enumerate}
    \item Let $\cF_{\theta}$ be the class of functions $f_{\theta}: s\rightarrow \sigma(\theta^T(x+Ma(\theta))) \in[0,1]$. By a standard concentration inequality, with probability at least $1-\delta$ for all $\theta \in \Theta$,
    \[ |\hat{\mu_1}(\theta) - \mu_1(\theta)| \leq 2R_n(\cF_{\theta}) + \sqrt{\frac{ln(2/\delta)}{2n}}\]
    Thus it remains to find an appropriate bound for $R_n(\cF_{\theta})$.
    Note that 
    \[R_n(\cF_{\theta}) = \sup_{x_1, \ldots, x_n \in \cX} \mathbb{E}_{\epsilon_i}[\sup_{\theta \in \Theta}|\frac{1}{n}\sum_{i=1}^n \epsilon_i \sigma(\theta^T(x_i+Ma(\theta)))|]\] \[\stackrel{\text{Talegrands}}{\leq}\sup_{x_1, \ldots, x_n \in \cX} \mathbb{E}_{\epsilon_i}[\sup_{\theta \in \Theta}|\frac{1}{n}\sum_{i=1}^n \epsilon_i \theta^Tx_i+\epsilon_i \theta^TMa(\theta)|]\]
    \[\leq \sup_{x_1, \ldots, x_n \in \cX} \mathbb{E}_{\epsilon_i}[\sup_{\theta \in \Theta}|\frac{1}{n}\sum_{i=1}^n \epsilon_i \theta^Tx_i|]+\mathbb{E}_{\epsilon_i}[\sup_{\theta \in \Theta}|\frac{1}{n}\epsilon_i \theta^TMa(\theta)|]\]
    \[\leq R_n(\cH_{\theta})+\sup_{\theta \in \Theta}|\theta^TMa(\theta)|/\sqrt{n}\]
    \item By an identical argument to the case of $\mu_1$ we have that with probability at least $1-\delta$ for all $\theta \in \Theta$ we have
    \[|\hat{\mu_2}(\theta) - \mu_2(\theta)| \leq \sup_{x_1, \ldots, x_n \in \cX} \mathbb{E}_{\epsilon}|\frac{1}{n}\sum_{i=1}^n \epsilon_i \beta^T x_i| + \sup_{\theta \in \Theta}|\beta^TMa(\theta)|/\sqrt{n}\]
    The first term is trivially bounded above by the empirical Rademacher complexity of the function class $\{w^Tx; x\in \cX, ||w||_2\leq ||\beta||_2\}$ which is bounded above by $||\beta||C/\sqrt{n}$
    \item Let $\cB_{\theta}$ be the function class $b_{\theta}: x\rightarrow \sigma(\beta^T(x+Ma(\theta))) \in[0,1]$, and let $\Pi_{\theta}$ be the function class $\pi_{\theta}: x\rightarrow \sigma(\beta^T(x+Ma(\theta)))\sigma(\theta^T(x+Ma(\theta))) \in[0,1]$. Note that this second function class can be thought of as the composition of the (1-Lipschitz on $[0,1] \times [0,1]$) function $\psi(x,y) = xy$ and the vector valued function $v(x) = (\sigma(\beta^T(x+Ma(\theta))), \sigma(\theta^T(x+Ma(\theta))))$. Thus by corollary 4 of \citep{maurer2016vectorcontraction}  we have 
    \[\cR_n(\Pi_{\theta}) \leq \sqrt{2}[\cR_n(\cB_{\theta})+\cR_n(\cF_{\theta})]\]
    From here we simply plug in the upper bounds for $\cR_n(\cB_{\theta})$ and $\cR_n(\cF_{\theta})$ attained in part 1 and 2 and the standard concentration inequality used throughout.
\end{enumerate} 
The fact that each quantity (1,2,3) is $\sim \tilde{\cO}(n^{-1/2})$ follows from the well-known result that $\cR_n(\cH_{\theta}) \sim \cO(n^{-1/2})$ if $x$ and $\theta$ are bounded. 
\end{proof}
\begin{proof}[\textit{Proof of theorem} \ref{thm: error analysis}]
\phantom{proof}

Note that $||M\mu(\hat{Q})||_{\infty} \leq ||M(\hat{\mu}(\hat{Q}) - \mu(\hat{Q}))||_{\infty} + ||M\hat{\mu}(\hat{Q})||_{\infty}$. By proposition \ref{prop: fairness_optimization_error} (and choice of $\nu$) it holds that $||M\hat{\mu}(\hat{Q})||_{\infty} \leq \frac{1+2\epsilon}{B}$. By the form of $M$, $||M(\hat{\mu}(\hat{Q}) - \mu(\hat{Q}))||_{\infty} \leq 2 ||\hat{\mu}(\hat{Q}) - \mu(\hat{Q})||_{\infty}$. Then by lemma \ref{lem: Error Analysis} and a union bound, with probability at least $1-6\delta$, it holds that $||\hat{\mu}(\hat{Q}) - \mu(\hat{Q})||_{\infty} \leq \cO(\text{min}(n_A, n_D)^{-1/2})+8\sqrt{\frac{ln(2/\delta)}{2\text{min}(n_A, n_D)}}$

Additionally, by our choice of $\nu$, $||M(\hat{\mu}(Q^*))|| \leq \hat{c}$, so by proposition \ref{prop: risk_optimization_error} $\widehat{\text{EPR}}(\hat{Q}) \leq \widehat{\text{EPR}}({Q^*}) + 2\epsilon$. By the argument of part 3 of lemma $\ref{lem: Error Analysis}$ (and the additive nature of Rademacher complexity), with probability at least $1-\delta$
\[|\widehat{\text{EPR}}(\hat{Q}) -{\text{EPR}}(\hat{Q})| \leq \cO(\text{min}(n_A, n_D)^{-1/2}) + \sqrt{\frac{ln(2/\delta)}{2\text{min}(n_A, n_D)}}\]
\[|\widehat{\text{EPR}}({Q^*}) -{\text{EPR}}({Q^*})| \leq \cO(\text{min}(n_A, n_D)^{-1/2}) + \sqrt{\frac{ln(2/\delta)}{2\text{min}(n_A, n_D)}}\]
Thus with probability at least $1-\delta$ it holds that ${\text{EPR}}(\hat{Q}) \leq {\text{EPR}}({Q^*}) + 2\epsilon + \cO(\text{min}(n_A, n_D)^{-1/2}) + 2\sqrt{\frac{ln(2/\delta)}{2\text{min}(n_A, n_D)}}$. A final union bound completes the proof.

\end{proof}
\section{Experiments}
\subsection{Experimental Details}
All experiments were done on Google Colab using only a CPU. The chosen parameters for data generation are $\beta = 1_{10}$, $\mu_A = 0.5*1_{10}$, $\mu_D = 0.1*1_{10}$, $c_A = \frac{4}{2}||a||_2^2$, $c_D =  \frac{10}{2}||a||_2^2$, $\Sigma=I_{10\times10}$, $\Lambda=I_{10\times10}$ with \emph{ex-ante} skills generated from $\cN(\mu_g, \Sigma)$ distributions. Each base line was also implemented using a reduction method. Step size $\eta_t$ was selected according to the convergence theory, $B$ was selected to achieve at least $10^{-5}$ fairness violation on the training set. Training and test sizes of 500 samples were used, with a random seed of 0 for the training set and a random seed of 1 for the test set.
\subsection{Additional Experiments}

\begin{figure}[H]
  \includegraphics[width=0.45\textwidth]{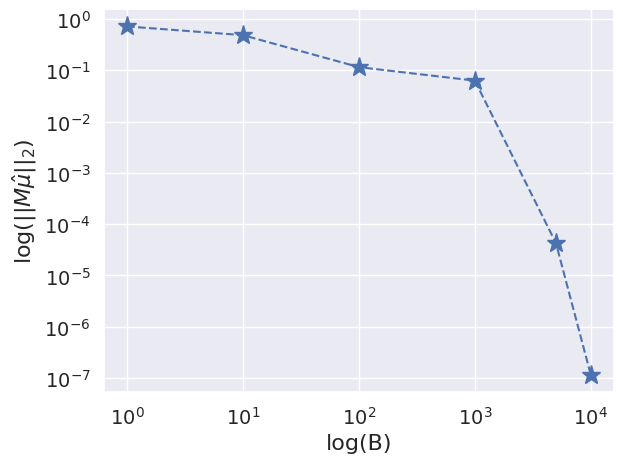}
  \caption{Equality of treatment + Equality of responses on training set}
  \label{fig:feasibility exp}
\end{figure}

\end{document}